\renewcommand{\hat}{\widehat}
\newcommand{\algname}{\textsc{Bait}\xspace}
\newcommand{\badge}{\textsc{Badge}\xspace}
\newcommand{\conf}{\textsc{Confidence}\xspace}
\newcommand{\coreset}{\textsc{Coreset}\xspace}
\newcommand{\random}{\textsc{Random}\xspace}
\begin{document}

\title{Gone Fishing: Neural Active Learning with \\ Fisher Embeddings}

\author{%
  Jordan T. Ash \\
  Microsoft Research NYC\\
  \texttt{ash.jordan@microsoft.com} \\
  \And
  Surbhi Goel \\
  Microsoft Research NYC\\
  \texttt{goel.surbhi@microsoft.com} \\
  \AND
  Akshay Krishnamurthy\\
  Microsoft Research NYC\\
  \texttt{akshaykr@microsoft.com}
  \And
  Sham Kakade \\
  Microsoft Research NYC\\
  University of Washington\\
  \texttt{sham.kakade@microsoft.com}
}

\maketitle

\begin{abstract}
There is an increasing need for effective active learning algorithms that are compatible with deep neural networks. 
    This paper motivates and revisits a classic, Fisher-based active selection objective, and proposes \algname, a practical, tractable, and high-performing algorithm that makes it viable for use with neural models.
    \algname draws inspiration from the theoretical analysis of maximum likelihood estimators (MLE) for parametric models.
    It selects batches of samples by optimizing a bound on the MLE error in terms of the Fisher information, which we show can be implemented efficiently at scale by exploiting linear-algebraic structure especially amenable to execution on modern hardware. Our experiments demonstrate that \algname outperforms the previous state of the art on both classification and regression problems, and is flexible enough to be used with a variety of model architectures. 
    
\end{abstract}
\vspace{-0.5cm}

\section{Introduction}
\vspace{-0.3cm}
\label{intro}
The active learning paradigm considers a sequential, supervised learning scenario in which unlabeled samples are abundant but label acquisition is costly. At each round of active learning, the agent fits its parameters using available labeled data before selecting a batch of unlabeled samples to be labeled and integrated into its training set. A well-chosen batch of samples is one that is maximally informative to the learner, such that it can obtain the best hypothesis possible given a fixed labeling budget.\looseness=-1

Active learning is well established as an area of machine learning
research due to the ubiquity of important real world problems that fit
the sample-abundant, label-expensive setting; commonly cited
applications range from medical diagnostics~\cite{med1, med2} to image labeling~\cite{cap}. Mitigating large sample complexity
requirements is particularly  relevant for deep neural networks, which
have in recent years achieved impressive success on a wide array of
tasks but often require considerable amounts of labeled data.\looseness=-1

Shifting the focus of active learning to deep neural networks highlights several important problems. For one, most foundational active learning work assumes a convex setting, which is clearly violated by massive nonlinear neural networks. Many of these approaches are computationally expensive, and it is not clear how to adapt them for real-world use~\cite{chaudhuri2015convergence}. Further, because neural network training is generally expensive, practical active learning algorithms must be able to work in the batch regime, querying $B$ samples at each round of active learning instead of a single point at a time~\cite{ash2019warm}.

Despite a long history of active learning research, these constraints draw attention to a need for practical, principled batch active learning algorithms for neural networks. Current state-of-the-art methods, like Batch Active Learning by Diverse Gradient Embeddings (\badge), perform robustly in experiments, but explanations for its behavior are fairly limited~\cite{ash2019deep}. This drawback makes it unclear how to scale some active learning algorithms into regimes that deviate somewhat from the setting for which they were designed---\badge, for example, cannot be run on regression problems, and as we show in this paper, performs poorly when used in conjunction with a convex model.


This article adopts a \emph{probabilistic perspective} of neural active learning. We view neural networks as specifying a conditional probability distribution $p(y \mid x,\theta)$ over label space $\Ycal$ given example $\Xcal$, where $\theta$ are the network parameters. This perspective provides theoretical inspiration from the convex regime with which to examine and design neural active learning algorithms. From this viewpoint we motivate and revisit a classic, Fisher-based objective for idealized active selection. We argue that approximately minimizing this objective can be done tractably in the neural regime, despite their overparametrized structure and shifting internal representation. Accordingly, this work helps bridge the divide between algorithms that are performant but not well understood by theory, and those that are theoretically transparent but not computationally tractable. 


Experimentally, \algname offers improved performance over baselines in deep classification problems, a trend that is robust to experimental conditions like batch size, model architecture, and dataset. Crucially, \algname is general purpose, and can be easily extended to regression settings, where many other algorithms cannot. It further performs well on both regression and classification with convex models, a paradigm in which other algorithms often struggle.
In summary, this paper
\begin{itemize}[leftmargin=0.75cm, topsep=2pt]
  \item puts neural active learning on firm probabilistic grounding, giving a new, rigorous perspective on the functionality of previously proposed algorithms.
  \item provides in-depth empirics that elucidate differences between neural and convex regimes, and discusses simplifying assumptions that are sometimes reasonable in the neural case.
  \item proposes a practical, unifying, high-performing active learning algorithm that leverages these insights in a computationally tractable manner.
\end{itemize}

\section{Related work}
\label{relatedWork}
\vspace{-0.2cm}
Active learning is a very well-studied problem~\cite{S10, D11, H14}.
There are two main sample selection approaches, diversity and uncertainty sampling, which are successful respectively for large and small batch sizes. \looseness=-1

Diversity sampling strategies aim to select batches of data that best represent the space. In a deep learning context, these algorithms typically embed unlabeled samples using the neural network's penultimate layer and select a subset of samples that might act as a proxy for the entire dataset~\cite{sener2018active, geifman2017deep}. \cite{gissin2019discriminative} proposed inducing batch diversity using a generative adversarial network formulation, selecting samples that are maximally indistinguishable from the pool of unlabeled examples.

There is also a rich body of work on batch active learning~\cite{guo2008discriminative, wang2015querying, pmlr-v28-chen13b, wei2015submodularity,batchbald}. These methods typically formulate batch selection as an optimization that minimizes an upper-bound on some notion of model loss.

Efficiently adapting parameters to an incrementally larger training set is not an issue in convex settings. Accordingly, with linear models, it is more common to use uncertainty sampling and a batch size of one. A frequently used approach is to query samples that lie closest to the current model's decision boundary, a quantity that's considered inversely proportional to uncertainty~\cite{TK01, schohn2000less, tur2005combining}. Some similar methods offer theoretical guarantees on statistical consistency~\cite{H14, A^2}. Other algorithms quantify uncertainty using the entropy of the predicted distribution over classes, or as the size of the expected gradient induced by observing the label corresponding with a candidate sample~\cite{settles2008multiple}. The latter is known to be related to the $T$-optimality criterion in experimental design, but is unable to account for batch diversity.\looseness=-1

Similar approaches have been modified for use with neural networks as well. For example, \cite{gal2017deep} exercise Dropout to sample weights to approximate the posterior distribution over labels, and use it to identify samples that reduce model uncertainty. Adversarial example generation has been used to approximate the distance between a sample and the decision boundary~\cite{ducoffe2018adversarial}. Model ensembling has also been used to approximate sample uncertainty, where the predictive variance across constituent models can be used to inform a sample selection strategy~\cite{ensembles}.

There are a variety of algorithms that are meant to combine uncertainty and diversity sampling~\cite{huang2010active}. This trade-off is sometimes framed as its own optimization problem, for example using a meta-learning approach that hybridizes both strategies~\cite{baram2004online,hsu2015active}. Among these is active learning by learning, which uses a bandit approach to select which query rule to employ at any given round of active learning~\cite{hsu2015active}. \badge, described in detail in Section~\ref{badgeComp}, also combines uncertainty and diversity sampling, and is considered state-of-the-art for deep neural networks.

\section{Notation and Setup}
\label{setup}
We consider a standard setup for batch active learning with neural network models, where there is an instance space $\Xcal$, label space $\Ycal$, and a distribution $D$ over $\Xcal\times\Ycal$. We use $D_\Xcal$ to denote the marginal distribution over the instance space and $D_{\Ycal \mid \Xcal}(x)$ to denote the conditional distribution over labels given example $x$. For learning, we are given access to a pool $U = \{x_i\}_{i=1}^n \sim D_\Xcal$ of unlabeled examples and we have the ability to request the label for any point $x \in \Ucal$. In the $t^{\textrm{th}}$ round of batch active learning, we select a collection $\{x^{(t)}_j\}_{j=1}^B \subset U$ of $B$ examples ($B$ is the batch size) and request the labels $y^{(t)}_j \sim D_{\Ycal \mid \Xcal}(x^{(t)}_j)$ for all examples in the batch. We use these labeled examples to update our neural network model and then we proceed to the next round.

In this setup, the ultimate objective is to achieve low loss on the data distribution $D$, that is we hope our learned parameters $\hat{\theta}$ nearly minimize $\EE_{(x,y) \sim D} \ell(x,y;\hat{\theta})$, where $\ell$ is some loss function like the cross entropy loss for classification. We always consider this objective in our experiments, but for algorithm development it is helpful to instead consider the \emph{fixed-design} or \emph{transductive} setting, where the goal is to instead minimize $L_U(\theta) = \EE_{x \sim U}\EE_{y \sim D_{\Ycal \mid\Xcal}(x)} \ell(x,y;\hat{\theta})$, essentially treating the unlabeled samples $U$ as the entire distribution. Note that these two objectives can typically be related by generalization arguments.

\section{Probabilistic Perspective}
\label{sec:probPer}
We consider neural networks as specifying a probability distribution $p(y \mid x,\theta)$ over the label space $\Ycal$ given an example $x$, where $\theta$ are the network parameters.
Adopting this view, it is
most natural to use the loss function $\ell(x,y;\theta) = -\log p(y\mid x,\theta)$, choosing parameters that maximize the likelihood
of observed labeled data. In classification problems, for example,
we apply the softmax operation to the output of the network and then
evaluate the cross-entropy loss with the ground truth label. For
regression problems, we use the square loss, which treats the neural
network as specifying a Gaussian distribution for each
$x$.\looseness=-1 

\textbf{Bayesian linear regression.}
As a warm-up, it is illustrative to consider an experimental design setting with Bayesian linear regression. We consider a $d$-dimensional linear regression problem where we assume the parameter vector $\theta^\star$ has prior distribution $\Ncal(0, \lambda^{-1} I)$ and the conditional distribution $D_{\Ycal \mid \Xcal}(x) = p(\cdot \mid x,\theta^\star) = \Ncal(\langle\theta^\star,x\rangle, \sigma^2)$ is Gaussian. For any set of labeled data $\{x_j,y_j\}_{j=1}^m$, the resulting maximum a posteriori (MAP) estimate is given by ridge regression with regularizer $\lambda \sigma^2$:
\begin{align}
    \hat{\theta} = \argmin_{\theta} \sum_{j=1}^m (\langle x_j,\theta\rangle - y_j)^2 + \lambda\sigma^2 \|\theta\|_2^2
\end{align}
In the experimental design setting, we have unlabeled data $U =\{x_i\}_{i=1}^n$ and our goal is to select a set $S \subset U$ of $B$ points so that the resulting MAP estimate has the lowest Bayes risk. Letting $\Sigma = \frac{1}{n}\sum_{i=1}^n x_i x_i^\top$ denote the second moment matrix of the unlabeled data, the Bayes risk is
\begin{align}
    \textrm{BayesRisk}(S) = \EE\sbr{ (\hat{\theta}_S - \theta^\star)^\top \Sigma (\hat{\theta}_S - \theta^\star) },
\end{align}
where $\hat{\theta}_S$ is the MAP estimate after querying for labels on subset $S$, and the expectation is with respect to the noise in the labels and the prior over $\theta^\star$. 

Lemma~\ref{bayesRisk} in the Appendix shows that for a subset $S \subset U$, letting $\Lambda_S = \sum_{x \in S} x x^\top +\lambda\sigma^2 I$, the Bayes risk in this setting is exactly:
\begin{align}
    \textrm{BayesRisk(S)} = \sigma^2\tr(\Lambda_S^{-1} \Sigma). \label{eq:bayes_regression}
\end{align}
Observe that the RHS does not depend on labels, implying that minimizing the RHS over subsets $S$ is feasible and \emph{optimal} selection strategy under this criteria. This also verifies that multiple batches of active learning are not required for Bayesian linear regression, although this observation does not carry forward to the neural setting. \algname is designed to approximately minimize this objective.

\textbf{Classical regime.} 
An objective similar to~\eqref{eq:bayes_regression} also emerges naturally in the analysis of maximum likelihood estimators (MLE) in the convex regime. Here, classical statistical theory posits that the model is \emph{well-specified}, so that there is some parameter $\theta^\star$ such that $D_{\Ycal \mid \Xcal}(x) = p(\cdot \mid x,\theta^\star)$ for each $x \in \Xcal$. It is also common to impose regularity conditions including strong convexity of the loss function $L_U(\theta)$ ~\cite{vandervaart2000asymptotic,lehmann1998theory}. While these conditions certainly do not hold in the neural setting, \algname builds on much of this classical technology. The key quantity is the \emph{Fisher information matrix} $I(x;\theta) := \EE_{y \sim p(\cdot \mid x,\theta)} \nabla^2 \ell(x,y;\theta)$ which is known to determine the asymptotic distribution of the maximum likelihood estimator~\cite{vandervaart2000asymptotic}. In many probabilistic models, including linear and logistic regression, the hessian of the loss function does not depend on the label $y$, which we assume going forward.\looseness=-1

For active learning in the classical setup, ~\cite{chaudhuri2015convergence} give a two-phase sampling scheme with provably near-optimal performance. In the first phase, the algorithm samples a batch of $B$ points uniformly at random, requests their labels, and optimizes the log-likelihood to obtain an initial estimate $\theta_1$. In the idealized version of the second phase, a batch of $B$ points is chosen to optimize
\begin{align}
    \argmin_{S \subset U, |S| \leq B} \tr\left( \left(\sum_{x \in S} I(x;\theta_1)\right)^{-1} I_U(\theta_1) \right)  \label{eq:mle_opt}
\end{align}
where $I_U(\theta_1)$ is the Fisher over all samples, $\sum_{x \in U} I(x;\theta_1)$.  This combinatorial problem is intractable in general, so~\cite{chaudhuri2015convergence} instead solve a semidefinite relaxation (SDP). They request labels on the obtained batch $B$ of points and re-fit the model to obtain the final estimate $\hat{\theta}$. For their setup, they prove the statistical properties of this two-phase estimator are near optimal. 

Despite the theoretical properties, solving an SDP is not feasible in high dimensions; instead we provide a new, greedy algorithm for minimizing the objective that is usable in the neural regime. Several other works have also looked at this objective, either from a purely theoretical perspective~\cite{chaudhuri2015convergence, loss5} or via relaxations~\cite{loss1, loss2, loss3}. However, some of these do not ensure batch diversity, and none have been extended to the neural regime.

%


This formulation essentially generalizes~\eqref{eq:bayes_regression}, since in linear regression the Fisher information $I(x;\theta)$ is the covariance matrix $xx^\top/\sigma^2$: the objectives in~\eqref{eq:bayes_regression} and~\eqref{eq:mle_opt} differ only in their use of the regularizer controlled by $\lambda$. That is, essentially the same objective can be derived from two different perspectives, making it a compelling object for active selection. As such, our starting point for the neural setting is the ideal-but-intractable optimization problem in~\eqref{eq:mle_opt}.
\section{\algname}
\label{sec:bait}

Batch Active learning via Information maTrices (\algname) is inspired by this theory, but adapted to the sequential, neural setting. To do this effectively, several key issues need to be addressed: 
\begin{enumerate}
    \item For neural models, the pointwise information matrix $I(x; \theta)$ is typically extremely large.
    \item The internal representation learned by the network changes with each round of active learning, so computation from previous rounds cannot be reused. 
    \item Solving the objective in Equation~\eqref{eq:mle_opt}, as suggested in more theoretical work, is computationally infeasible~\cite{chaudhuri2015convergence}.
\end{enumerate}

Outlined as Algorithm~\ref{alg:main}, \algname addresses item 1 in a somewhat standard way, by operating on the last layer of the network~\cite{ash2019deep, sener2018active}. We consider last-layer Fisher matrices $I(x; \theta^L) := \EE_{y \sim p(\cdot \mid x,\hat{\theta})} \nabla^2 \ell(x,y;\theta^L)$ for last-layer parameters $\theta^L$. Note that in the linear setting $\theta^L = \theta$. Here, if the top-layer representation
starts to well-approximate a convex model, then the
information geometry induced solely by these parameters can guide
active sampling.  Further, as we discuss shortly, this top-layer
framework gives us a more principled understanding of the empirical success of \badge.

\begin{algorithm}
\begin{algorithmic}[1]
\REQUIRE Neural network $f(x;\theta)$, unlabeled pool of examples $U$, initial number of examples $B_0$,
number of iterations $T$, number of examples in a batch $B$.
\STATE Initialize $S$ by drawing $B_0$ labeled points from $U$ \& fit model on $S$:  $\theta_1 = \argmin_{\theta} \EE_S[\ell(x,y;\theta)]$
\FOR{$t = 1,2,\ldots,T$: \COMMENT{forward greedy optimization}}
\STATE Compute $I(\theta_t^L) = \frac{1}{|U|} \sum_{x \in U} I(x; \theta_t^L)$ \\
\STATE Initialize $M_0 = \lambda I +  \frac{1}{|S|}\sum_{x \in S} I(x; \theta_t^L )$ \\
\FOR{$i = 1, 2,\ldots, 2B$:} 
\STATE $\tilde{x} = \argmin_{x \in U} \tr( (M_i + I(x; \theta^L_t))^{-1} I(\theta_t^L))$ \\ \label{algMin1}
\STATE $M_{i+1} \gets M_i + I(\tilde{x}; \theta^L_t)$, $S \gets \tilde{x}$ \\
\ENDFOR
\FOR{$i = 2B, 2B -1, ..., B$: \COMMENT{backward greedy optimization}} 
\STATE $\tilde{x} = \argmin_{x \in S} \tr((M_i - I(x; \theta_t^L))^{-1} I(\theta^L_t))$ \label{algMin2}
\STATE $M_{i-1} \gets M_i - I(\tilde{x}; \theta^L_t)$, $S \gets S \setminus \tilde{x}$
\ENDFOR
\STATE Train model on $S$:  $\theta_t = \argmin_{\theta} \EE_S[\ell(x,y;\theta)]$.
\ENDFOR
\STATE \textbf{return} Final model $\theta_{T+1}$.
\end{algorithmic}
\caption{\algname}
\label{alg:main}
\end{algorithm}
One more subtle issue (item 2) is the interplay between the changing representation as learning progresses. We address this with an iterative scheme, where the Fisher information matrix is continually recomputed  as the algorithm changes its representation during the course of learning.

Rather than solving an SDP, \algname approximates a solution to Equation~\eqref{eq:mle_opt} using a greedy approach, which we show can be made efficient in both classification and regression settings. At each step of the algorithm, the key computation lies in evaluating
\begin{align}
     \tilde{x} = \argmin_{x \in U} \tr( (M_i + I(x; \theta^L_t))^{-1} I(\theta_t^L)),
     \label{algLine}
     \vspace{-0.5cm}
\end{align}
where $M_i$ is the Fisher corresponding to samples that have been selected so far.

Unfortunately, the trace function is not submodular, and is thus not well suited for standard greedy optimization. To address this, during each iteration, where the goal is identify $B$ points to query, sampling is done in two stages. For a batch of $B$ points, the first
stage greedily oversamples, adding $2B$ samples to the initial
batch. In the second stage, \algname prunes $B$ samples from the
batch, better minimizing the objective described
in~\eqref{eq:mle_opt}.  We find that this forward-backward strategy
sometimes improves performance over the forward-only alternative
(Figure~\ref{fig:forwardBackward}). See Algorithm~\ref{alg:main} for details. Choosing two as the oversampling factor of two is done for computational reasons, trading-off between computational cost and batch quality. We did not see performance improvements for larger multipliers.\looseness=-1



When evaluating the $i$-th sample to include in $S$, the minimization in Equation~\eqref{algLine} is efficiently computed using a trace rotation and the Woodbury identity for low-rank inverse updates:
\begin{wrapfigure}{r}{0.4\textwidth}
\vspace{0.21cm}
\hspace{-0.3cm}
\centering
\includegraphics[trim={0, 0, 0, 0cm}, clip, scale=0.45]{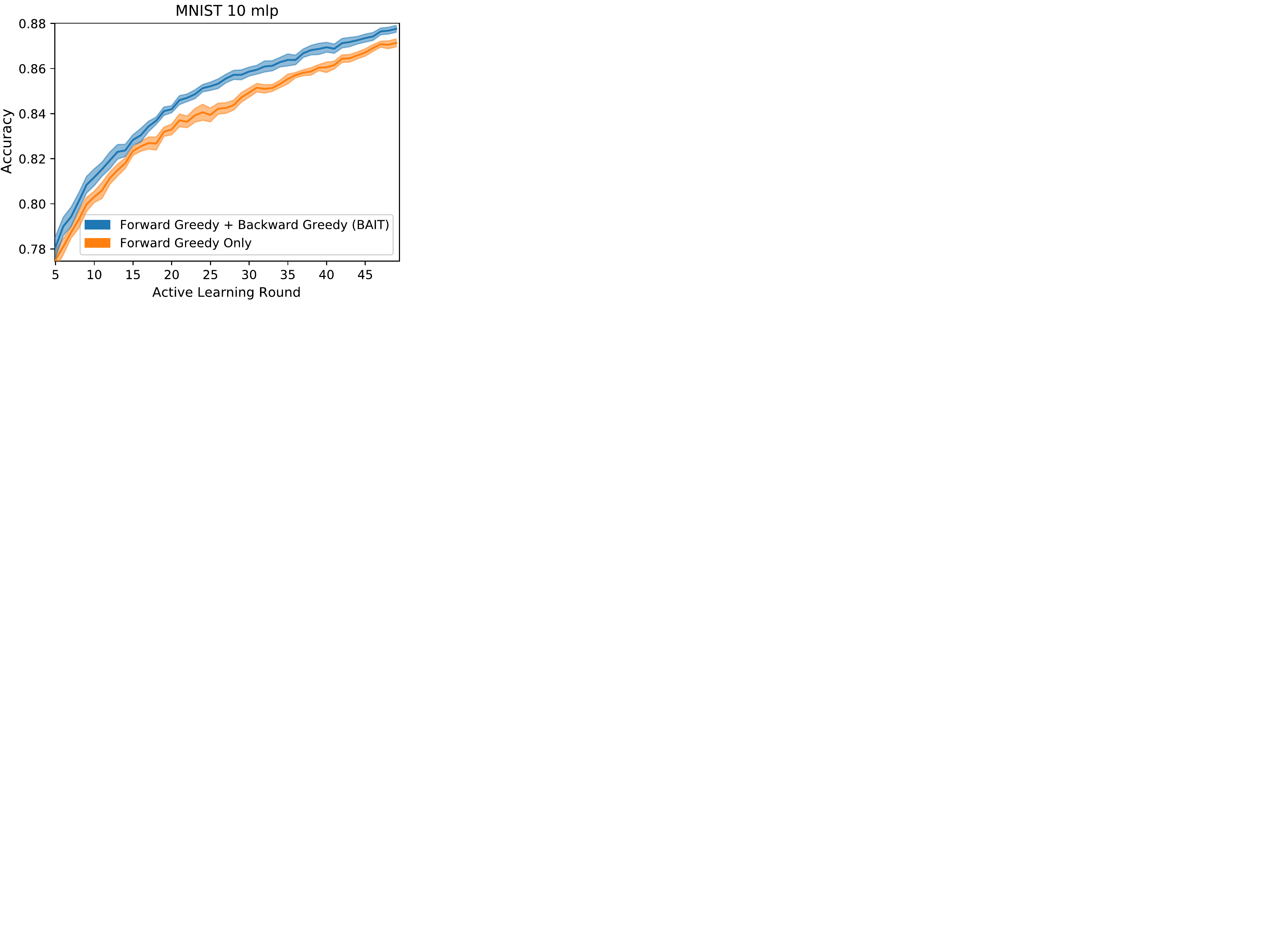}
\vspace{-0.0cm}
\caption{A comparison between forward and forward-backward greedy approaches for \algname. Here we show a simple active learning experiment using an MLP and MNIST data~\cite{mnist}, and samples are acquired in batches of size 10 for 50 rounds. See Section~\ref{experiments} for more details.\vspace{-0.5cm}}
\label{fig:forwardBackward}
\vskip -3cm
\end{wrapfigure}
{\small
\begin{align*}
\vspace{-0.6cm}
& \argmin_x \tr\left(\left(M_i + V_xV_x^\top\right)^{-1} I(\theta^L_t)\right) \\
&=\argmin_x \tr\left(\left(M_i^{-1} - M_i^{-1}V_x A^{-1}  V^\top M_i^{-1}\right)I(\theta^L_t)\right)\\
&=\argmin_x \tr\left(M_i^{-1}I(\theta^L_t)\right) - \tr \left(M_i^{-1}V_x A^{-1}  V_x^\top M_i^{-1}I(\theta^L_t)\right)\\
&=\argmin_x \tr\left(M_i^{-1} I(\theta_t)\right) - \tr \left(V_x^\top M_i^{-1}I(\theta^L_t) M_i^{-1}V_xA^{-1}\right)\\
&=\argmax_x \tr \left(V_x^\top M_i^{-1}I(\theta^L_t) M_i^{-1}V_xA^{-1}\right),
\vspace{-0.5cm}
\end{align*}
}
where $A = I + V_x^\top M_i^{-1}V_x$ is an easily invertible $k\times k$ matrix. Here $V_x$ is a $dk \times k$ matrix of gradients, where each column is scaled by the square root of the corresponding prediction: $V_xV_x^{\top} = I(x, \theta^L)$. This formulation keeps us from having to compute and store all candidate $I(x; \theta^L)$, drastically decreasing the algorithm's memory footprint. The trace rotation step, placing $V_x$ as the leading term instead of $M_i^{-1}$ is essential, as it avoids computing a new $kd \times kd$ matrix for each $x$. As a practical matter, on all datasets we consider in Section~\ref{experiments}, this allows us to compute the trace contribution of all candidate samples simultaneously on a modern GPU.

After the minimizer $x$ is found, updating $M_i^{-1}$ is done simply via the same Woodbury identity, $M_{i+1}^{-1} = M_i^{-1} - M_i^{-1}V_x A^{-1}  V_x^\top M_i^{-1}$, and the algorithm proceeds to identify the next sample.

\textbf{Regression.} In the regression setting we are able to further reduce the amount of required computation. Let $x^L$ denote the penultimate layer representation induced by $f(x; \theta)$. For linear models $x^L = x$. In Appendix Section~\ref{fisherRegression}, we show that a $k$-output regression model trained to minimize squared error has pointwise Fisher $I(x; \theta^L) = (x^L)(x^L)^\top \otimes \hat{\Sigma}^{-1}$, where $\hat{\Sigma}$ is the noise covariance of the estimator. Using this fact, the regression version of the Fisher objective is
{\small
\begin{align}
    \label{eq:regressonCase}
    \tr\left( \left(\sum_{x \in S} I(x; \theta^L)\right)^{-1} I_U(\theta^L) \right) =k \tr \left(\left(\sum_{x \in S} x^L(x^L)^\top \right)^{-1} \left( \sum_{x \in U} x^L(x^L)^\top\right)\right).
\end{align}
}
The full derivation can be found in Appendix~\ref{regressObj}. This observation greatly simplifies the minimization in Equation~\eqref{algLine}, allowing us to use only rank one matrices $x^L(x^L)^\top$ in place of the rank $k$ matrices in the classification setting. The procedure is written explicitly in Appendix~\ref{minRegression}.


\begin{figure}[t]
\centering
\begin{minipage}{.42\textwidth}
    \vspace{-0.75cm}
    \hspace{-0.5cm}
    \includegraphics[trim={0, 0cm, 0, 0}, clip, scale=0.45]{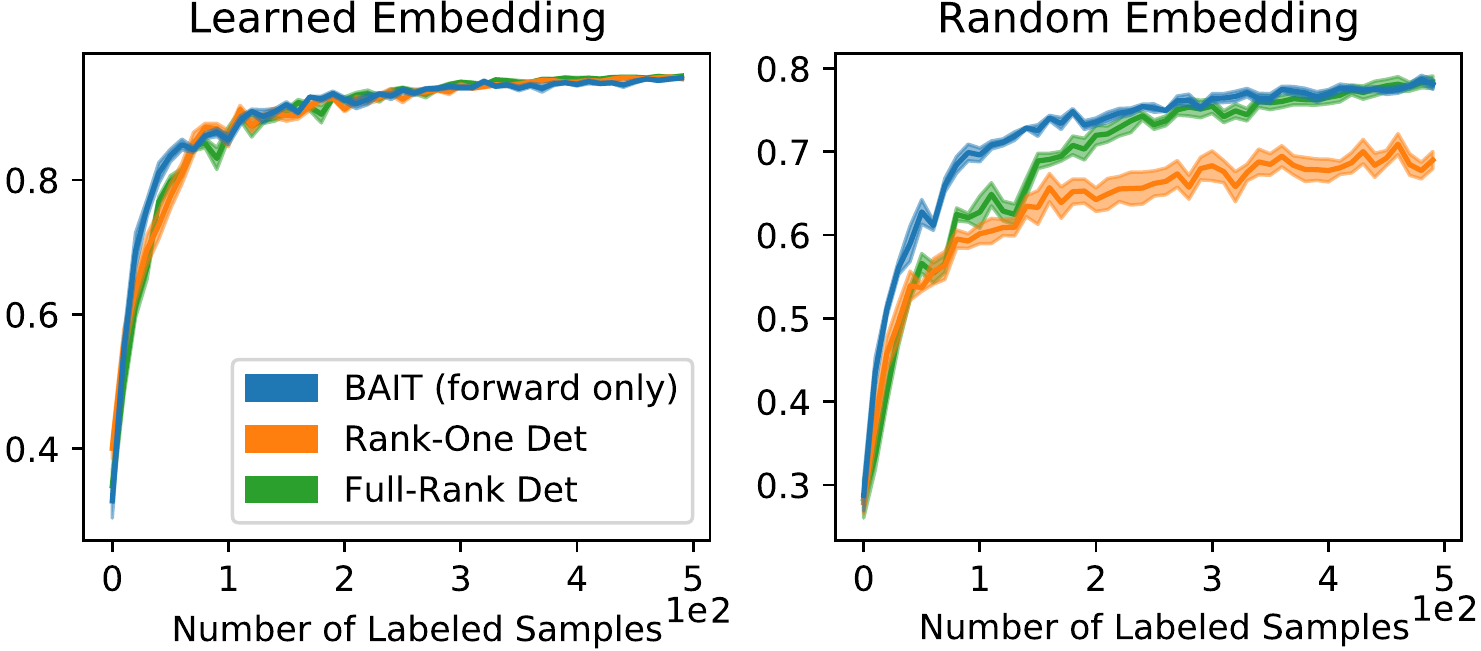}
    \vspace{-0.5cm}
    \caption{Linear classification on different representations of MNIST data. \textbf{Left:} Learned features, similar those from a neural network. \textbf{Right:} A random, uninformed projection, simulating the raw features a convex model may have to use.}
    \label{fig:detPlot}
\end{minipage}%
\hfill
\begin{minipage}{.56\textwidth}
    \centering
    \vspace{-0.85cm}
    \includegraphics[trim={0, 0cm, 0, 0}, clip, scale=0.46]{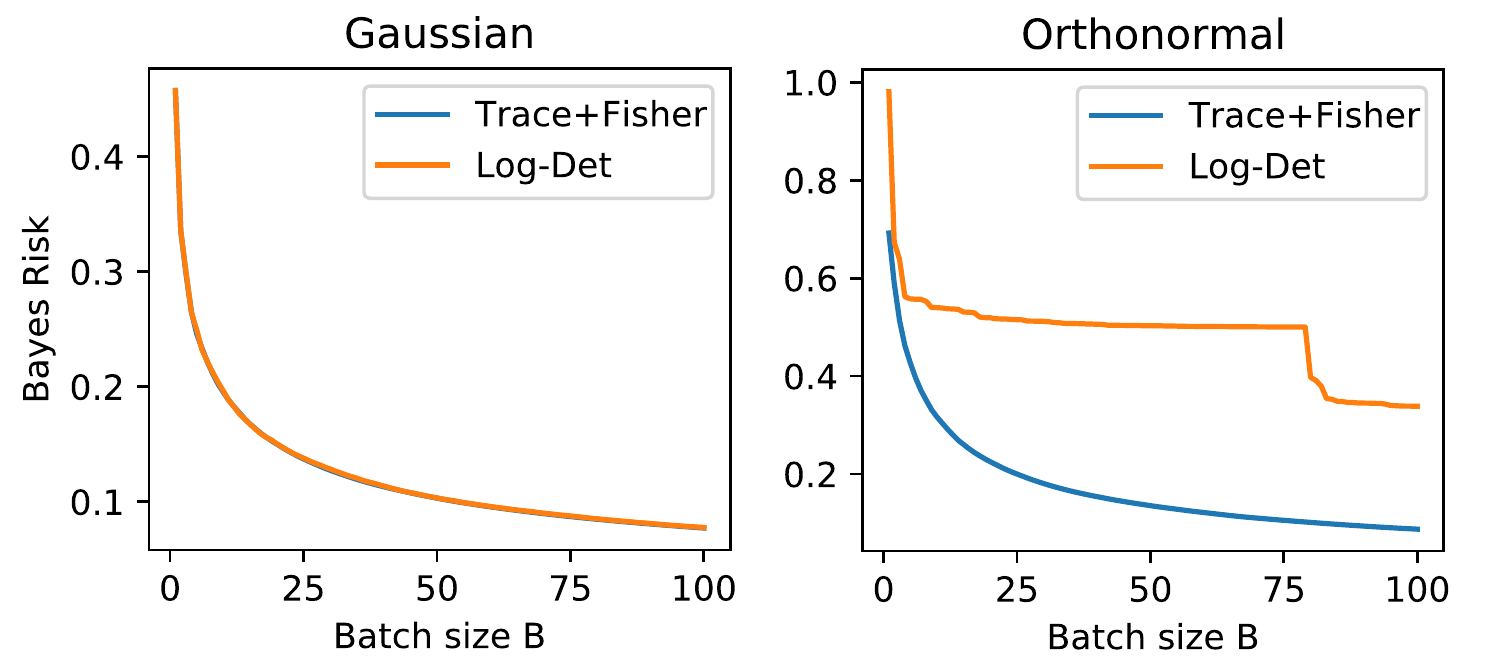}
    \vspace{-0.15cm}
    \caption{Bayesian linear regression simulations comparing \algname and determinantal maximization. In both cases the data have poorly conditioned covariance matrices with quadratic spectral decay. Determinantal maximization exploits this in the Gaussian case, but not the orthonormal case. \algname performs well in both settings.}
    \label{fig:bayes_risk}
\end{minipage}
\end{figure}


\subsection{\badge comparison}
\label{badgeComp}
\vspace{-0.3cm}
By comparison, \badge, a recently proposed, state-of-the-art active learning classification algorithm, aims to select a batch of samples that are likely to induce large and diverse changes to the model~\cite{ash2019deep}. This is done by representing each candidate sample $x \in U$ as $g_x = \nabla \ell(x,y=\hat{y}; \theta^L)$, the $d$-dimensional last-layer gradient that would be obtained if the most likely label according to the model, $\hat{y}$, were observed. \badge selects a batch of samples that have large Gram determinant in this space.\looseness=-1

The intuition behind \badge is that sampling proportionally to the Gram determinant of these hallucinated gradients trades-off between uncertainty and diversity; a batch of gradient embeddings that produce a large Gram determinant will need to be both high magnitude (corresponding to model uncertainty) and linearly independent (corresponding to batch diversity). It is worth noting that while \badge sampling is \textit{motivated} by  determinantal point process (DPP) sampling, the actual \badge algorithm only uses a rough approximation to this procedure.

Still, from the perspective of \algname, the ``gradient embedding'' used in \badge is a single column of the $dk \times k $ matrix $V_x$, but not scaled by $\sqrt{p_i}$. These embeddings can correspondingly be thought of as rank-one approximations for $I(x; \theta)$. \algname trades \badge's determinantal sampling for a trace minimization ($A$-optimality). This substitution is essential because the determinantal approach is unable to accommodate for $I(\theta)$, as $\argmax_x \det (I(x;\theta)^{-1}I(\theta)) = \argmax_x \det (I(x;\theta)^{-1})$ for any $I(\theta)$.\looseness=-1

Thus, \algname offers two main advantages over \badge. First, it considers the entire rank-$k$ pointwise Fisher, catching potentially useful information that's ignored by \badge. Second, \algname incorporates the Fisher over all samples $I(\theta)$, a term we show to be essential for minimizing risk and bounding MLE error. Crucially, because \badge identifies this vector as corresponding to the most likely label according to the model, and this only makes sense in classification settings, it is unable to handle regression problems, a regime to which \algname naturally extends.

\textbf{Comparing objectives.} These observations make it clear that \algname is more general than \badge, but it is not obvious which of the aforementioned algorithmic extensions is most important for boosting performance. Figure~\ref{fig:detPlot} directly compares those objectives for batch sample selection. Specifically, we run three variations: greedily maximizing the determinant of the rank-one \badge gradient embeddings, greedily maximizing the determinant of the full-rank Fisher, and the \algname approach, taking into consideration both the full-rank pointwise Fisher matrices and $I(\theta^L)$. The two determinantal algorithms are written formally in the Appendix as Algorithm~\ref{alg:det1} and Algorithm~\ref{alg:det2}, and can be made efficient by taking advantage of Woodbury identities. Here \algname uses only forward greedy optimization, rather than both forward and backward, to ensure a fair comparison.\looseness=-1


We study two simple projections of the MNIST dataset. In one, we fit a two-layer MLP on 50\% of the training data, and embed the remaining 50\% using the first layer. We perform active learning in this 128-dimensional space on the unseen 50\% of examples, selecting 50 batches of size 10 in sequence.

We then conduct a similar experiment, but instead of using a learned representation, we use a random (Gaussian with mean zero and unit variance) matrix to project samples into 128 dimensions, a setting in which, unlike in the learned representation, the largest directions are not necessarily the most discriminative. Note that this representation allows us to control feature dimensionality but mimics the typical convex learning paradigm, where features are fixed, not conditioned on labels, and not controllable by the learner. 

In both plots, the \algname objective outperforms determinantal objectives. This effect is more drastic for the uninformed embedding, where the plot suggests that the full-rank pointwise Fisher is more useful than its low-rank counterpart for late-stage performance, and that $I(\theta^L)$, as used in \algname, is especially beneficial for early stage performance.

\textbf{Synthetic experiment.} We conduct a small synthetic experiment to demonstrate the value of incorporating the Fisher information matrix into the acquisition strategy. In~\pref{fig:bayes_risk} we plot the exact Bayes Risk~\pref{eq:bayes_regression} in the Bayesian linear regression setup described in~\pref{sec:probPer} as a function of the batch size $B$ for both \algname and the greedy determinant maximization strategy. Here we consider two distributions in $d=100$ dimensions. In the left plot data are generated from a Gaussian distribution with diagonal covariance with quadratic spectral decay $\Sigma_{ii} \propto 1/i^2$. On the right,
the distribution is supported only on the standard basis, with probabilities that decay quadratically $p_i := \PP[x= e_i] \propto 1/i^2$. Note that both distributions have identical and poorly conditioned covariance $\Sigma$ (recall~\pref{eq:bayes_regression}). 
This allows us to highlight the value of the Fisher matrix and how it leads to robust performance across data distributions.  Indeed, we see that in the Gaussian case, both the \algname strategy (called ``Trace+Fisher'' in the figure) and the determinental maximization strategy (``Log-det'') perform almost identically. However, \algname  
significantly outperforms the alternative in the orthonormal case. This occurs because the latter does not exploit the occurrence probabilities $p_i$ and in fact simply selects the coordinates in a cyclic fashion. On the other hand, the optimal strategy focuses effort on the high-probability coordinates, which is exactly captured in the Fisher matrix.\footnote{Note that in the orthonormal case, both greedy optimization algorithms are in fact optimal for their respective combinatorial problems.}

\begin{SCfigure}[0.9]
    \hspace{-0.7cm}
    \includegraphics[trim={0, 0cm, 0, 0cm}, clip, scale=0.55]{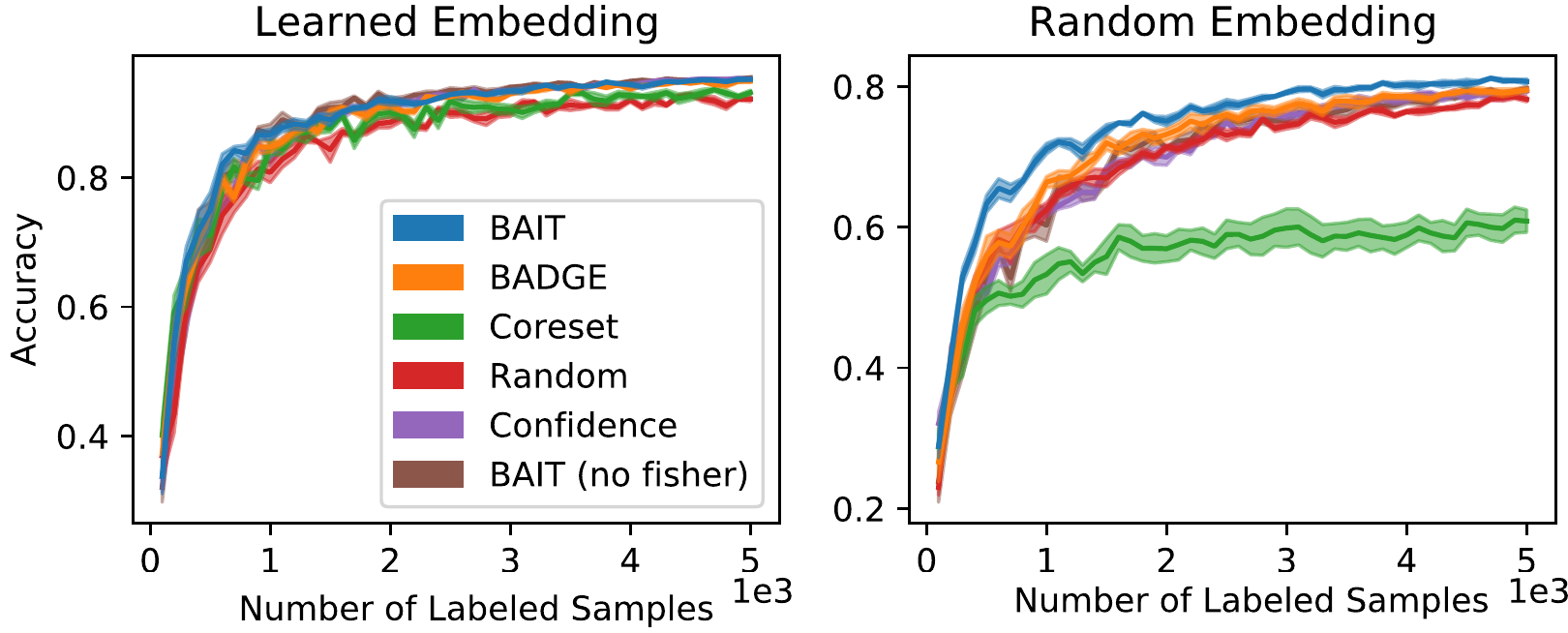}
    \vspace{-0.0cm}
    \hspace{-0.25cm}
    \caption{The same plots as Figure~\ref{fig:detPlot}, but comparing \algname to baseline active learning algorithms. We include \algname without $I(\theta^L)$ for a clearer comparison. \algname most drastically outperforms baselines on the uninformed representation, where high-norm directions are not necessarily most discriminative.}
    \label{fig:projPlot}
    \vspace{-0.5cm}
\end{SCfigure}
\vspace{-0.1cm}

\vspace{-0.2cm}
\section{Experiments}
\vspace{-0.3cm}
\label{experiments}
\vspace{-0.2cm}
In this section we detail extensive experiments that highlight the generality and performance of \algname. We consider three settings: linear classification, deep classification, and regression. Throughout these sections, we compare \algname to several recently proposed and classic active learning approaches.\looseness=-1

Among these, we consider \badge, \coreset, \conf, and \random sampling. \badge, as mentioned earlier, is a state-of-the-art approach that trades off between diversity and uncertainty by approximately sampling a batch of points that have high Gram determinant when represented as a gradient. \coreset represents items using the model's penultimate layer representation, then samples a batch that describes the space well. \conf sampling selects the $n$ points for which the model is least confident, measured by $\max f(x; \theta)$. \random draws $n$ points uniformly at random.
\vspace{-0.2cm}
\subsection{Linear Classification}
\vspace{-0.3cm}
Like \badge and \coreset, \algname caters to efficiency in part by only considering the last layer of the network to select a new batch. Despite this linear assumption, both \coreset and \algname are unable to perform well outside of the neural regime. 

This subsection revisits the simplified setting described in Section~\ref{badgeComp} and Figure~\ref{fig:detPlot}, involving both informed and uniformed representations of MNIST. In the learned representation, performance differences between algorithms is relatively subdued, with \algname, \badge, and \conf among the highest-performing agents. However, in the unstructured, random representation, the are stark differences in accuracy. While controlling for dimensionality, this representation mimics the convex case, where the model is not able to control how data are represented. Here, \algname outperforms baseline approaches by a large margin (Figure~\ref{fig:projPlot}).

Among these comparisons, we include a simplified version of \algname, which omits the Fisher term $I(\theta)$, resulting in an objective that has been explored by~\cite{sourati2018active}. This approach performs on par with other baselines, suggesting that it is the inclusion of $I(\theta)$ that allows \algname to succeed even in difficult, poorly structured feature spaces. 
This experiment further highlights a potential cause of the success of these baselines, 
as the penultimate-layer representation will behave more like what's described here as a learned representation than a random representation. Still, the following subsection shows \algname outperforming baselines in deep classification.

\begin{wrapfigure}{r}{0.4\textwidth}
    \vspace{-2.2cm}
    \hspace{-0.4cm}
    \includegraphics[trim={0, 0, 0, 0}, clip, scale=0.66]{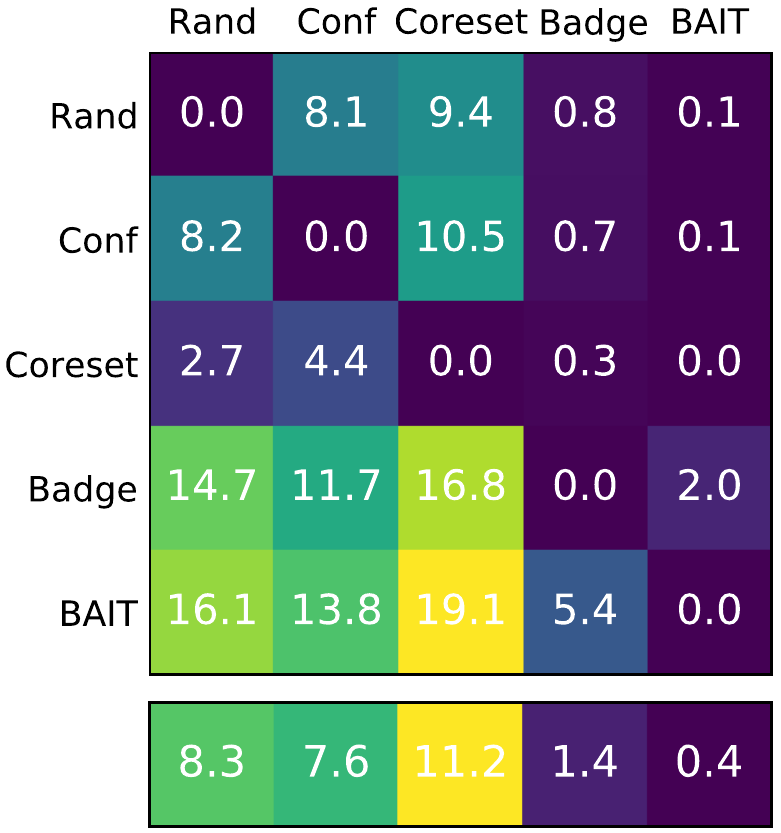}
    \vspace{-0.2cm}
    \caption{A pairwise comparison plot. Element $i$ $j$ roughly corresponds to the number of times algorithm $i$ outperforms algorithm $j$ by a statistically significant degree. Columwise averages are given at the bottom, where a lower number corresponds to a higher-performing algorithm.
    \label{fig:pairwisePlot}}
    \vspace{-1.1cm}
\end{wrapfigure}

\subsection{Deep Classification}
\label{sec:deepClassification}

\begin{figure*}[t]
    \centering
    \hspace{-0.8cm}
    \includegraphics[trim={0, 0cm, 0, 0}, clip, scale=0.48]{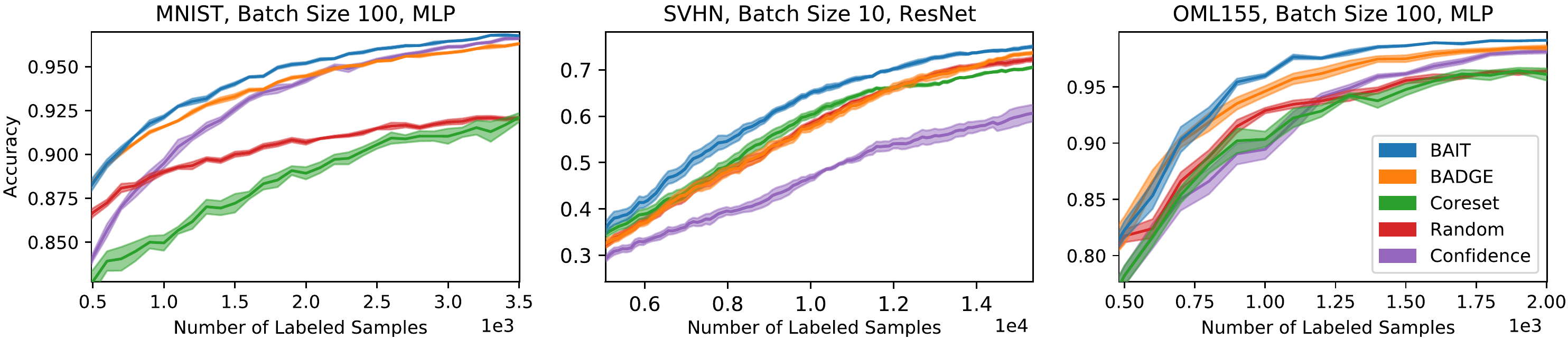}
    \vspace{-0.2cm}
    \caption{Three deep active learning experiments with different model architectures, datasets, and batch sizes. \textbf{Left:} An MNIST experiment, using a batch size of 100 and an MLP. \textbf{Center:} Active learning on the SVHN dataset with an 18-layer ResNet and a batch size of 10, smoothed for clarity (unsmoothed plot in the Appendix). \textbf{Right:} Active learning on the OpenML dataset 155 using an MLP and a batch size of 100. Here we zoom in on disriminative regions of learning curves.
    \label{fig:deepClasificationPlot}}
    \vspace{-0.5cm}
\end{figure*}

We now turn to our main experiments, active learning for classification with neural networks. This subsection provides extensive results for the above algorithms across a wide array of settings. 

We consider three datasets. Using an MLP, we perform active learning on both MNIST data and OpenML dataset 155. We also use the SVHN dataset~\cite{svhn} of color digit images with both an MLP and an 18-layer ResNet. Last we explore the CIFAR-10 object dataset~\cite{cifar} with a ResNet. All dataset-architecture pairs are experimented with at three batch sizes---10, 100, and 1000. MLPs include a single hidden ReLU layer of 128 dimensions.

All ResNets are trained with a learning rate of $0.01$, and all other models (including linear models shown earlier) are trained with a learning rate of $0.0001$. We fit parameters using the Adam variant of SGD, and use standard data augmentation for all CIFAR-10 experiments. 
Like other deep active learning work, we avoid warm-starting and retrain model parameters from a random initialization after each query round~\cite{ash2019warm}. Each learner is initialized with 100 randomly sampled labeled points, and each experiment is repeated five times with different random seeds. Shadowed regions in plots denote standard error. More empirical details can be found in Appendix Section~\ref{appExp}.

Figure~\ref{fig:deepClasificationPlot} zooms in on the discriminative regions of learning curves corresponding to three different settings. While the relative performance of baseline algorithms varies somewhat across scenarios, \algname is consistently as good or better than the highest-performing approach. Full learning curves are presented in Appendix Section~\ref{moreCurves}.

Due to the volume of settings investigated, we present aggregate results using the analysis approach of~\cite{ash2019deep}. For each experiment, we note the round $r$ of active learning for which random selection first obtains accuracy within 1\% of its final accuracy. We then checkpoint each algorithm at exponential intervals up to $r$, that is, we log each labeling budget $L$ for which $L_k = M_0 + 2^{k}B \leq r$, for batch size $B$ and number of seed samples $M_0$. At each $L$ in a given experiment, we compute the $t$-score, $t=\frac{\sqrt{N} \hat{\mu}}{\hat{\sigma}}$, where $N$ is the number of samples, between each pair of algorithms $i \neq j$ as
\[  \hat{\mu} = \frac{1}{N} \sum_{l=1}^N (e_i^l - e_j^l), \;\;\;\;\;\;
  \hat{\sigma} = \sqrt{\frac{1}{N-1} \sum_{l=1}^N (e_i^l - e_j^l - \hat{\mu})^2 },\]
where $e_i^l$ and $e_j^l$ denote the $l$-th accuracy respectively corresponding to algorithms $i$ and $j$ at labeling budget $L_k$. We then perform a two-sided $t$ test, where algorithm $i$ is said to outperform algorithm $j$ if $t > 2.776$, and vice versa if $t < -2.776$, marking a significant difference ($p < 0.05$).

This formulation allows us to construct a \textit{pairwise penalty} matrix over all conducted experiments. The matrix has as many rows and columns as there are considered algorithms (five); if algorithm $i$ outperforms algorithm $j$ for some experiment at some labeling budget, the corresponding element $i$ $j$ of the matrix is incremented by $\nicefrac{1}{z}$, where $z$ is the total number of labeling budgets considered for that experiment.

\begin{figure*}[t]
    \centering
    \hspace{-0.7cm}
    \includegraphics[trim={0, 0cm, 0, 0cm}, clip, scale=0.54]{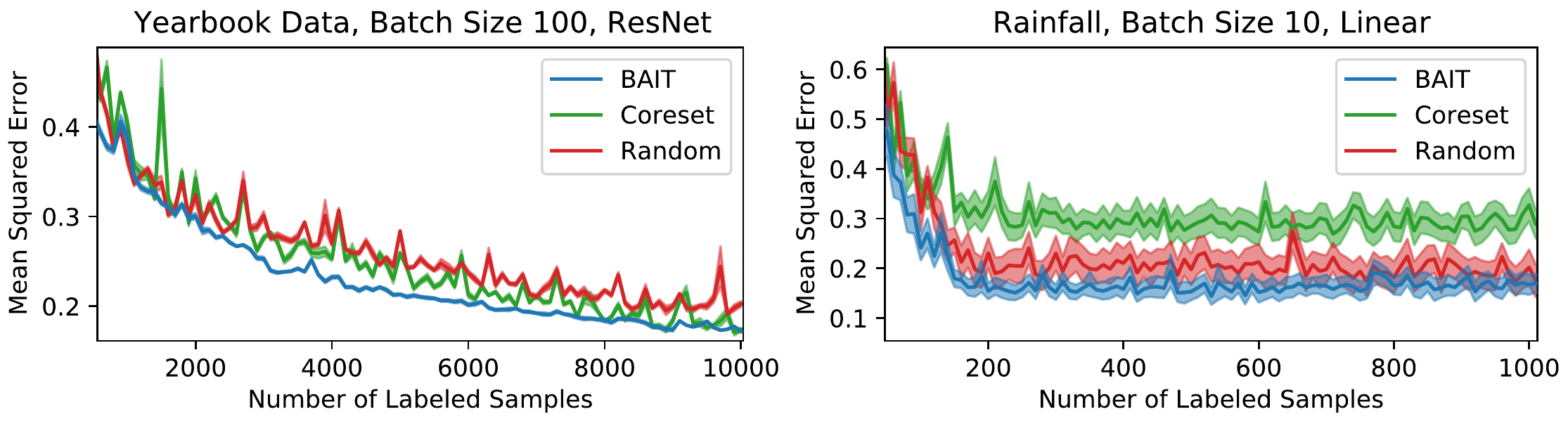}
    \vspace{-0.3cm}
    \caption{Two regression experiments with varying architectures. \textbf{Left:} Active regression using an 18-layer ResNet, predicting the year in which American yearbook photos were taken. \textbf{Right:} A linear model used to predict rainfall from meteorological features. \looseness=-1
    \label{fig:deepRegressionReal}}
    \vspace{-0.1cm}
\end{figure*}

\begin{figure*}[t]
    \centering
    \hspace{-0.7cm}
    \includegraphics[trim={0, 0cm, 0, 0cm}, clip, scale=0.6]{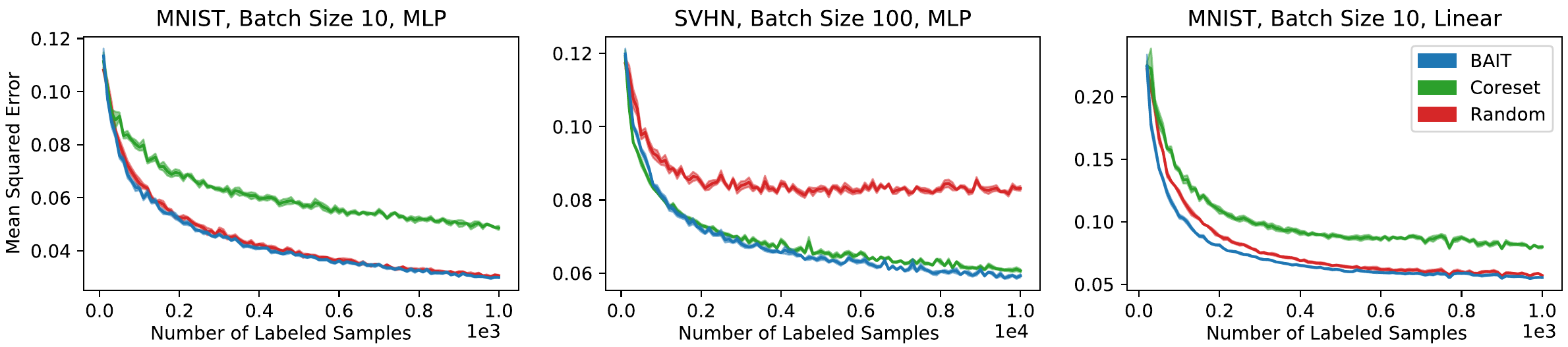}
    \vspace{-0.2cm}
    \caption{Three regression experiments with varying architectures. \textbf{Left:} Active regression using an MLP, MNIST data, and a batch size of 10. \textbf{Center:} Active regression using SVHN data and a batch size of 100. \textbf{Right:} the same as the leftmost plot, but using a linear model instead of an MLP.
    \label{fig:deepRegressionPlot}}
\end{figure*}

The resulting plot is given in Figure~\ref{fig:pairwisePlot}, which aggregates results over all conducted experiments, and which suggests \algname significantly outperforms baseline approaches. We also include columwise averages, which give a holistic perspective on algorithm performance.

We show more pairwise plots of this type in Appendix Section~\ref{morePairs}, breaking up results by batch size and and architecture type. These figures all suggest \algname is higher-performing than baseline approaches across environments.\looseness=-1

\vspace{0.3cm}
\subsection{$L_2$ Regression}

\begin{wrapfigure}{r}{0.4\textwidth}
    \vspace{-0.5cm}
    \hspace{-0.3cm}
    \includegraphics[trim={0, 0, 0, 0}, clip, scale=0.65]{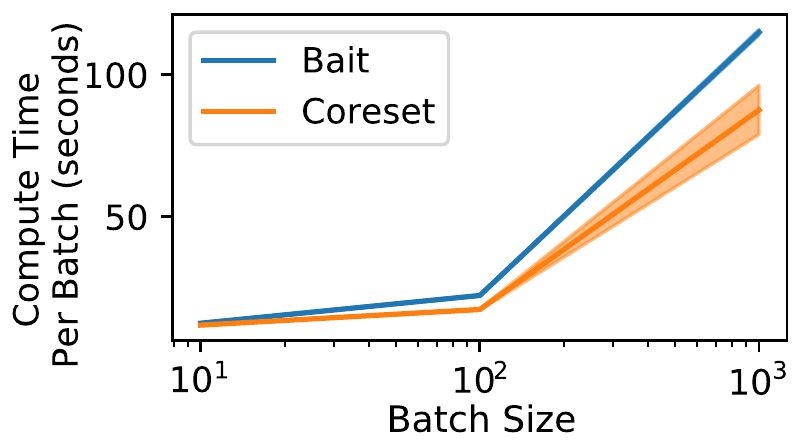}
    \vspace{-0.2cm}
    \caption{In the regression setting, the rank-one reduction of greedy selection in \algname makes the approach only slightly slower than Coreset.}
    \label{fig:timePlot}
    \vspace{-0.5cm}
\end{wrapfigure}

Although deep learning is most commonly discussed within a classification framework, recent work has successfully applied deep learning in regression settings as well, with important scientific applications including areas like physical, biological, and chemical modeling~\cite{science1, science2}. It is therefore important to develop active learning algorithms that are flexible enough to be applied in these domains.

Figure~\ref{fig:deepRegressionReal} presents active learning results using two different model architectures, two different batch sizes, and two different datasets. In the first, we train an 18-layer ResNet to predict the year in which photos from an American yearbook were taken~\cite{ginosar2015century}. To do this successfully, the model must learn to correlate trends in photography and fashion with a time period. Here labels were Z-scored, so error is not measured in terms of year. In the second, we use meteorological features and a linear model to predict the amount of rainfall in Austin, Texas~\cite{kaggle}.

Figure~\ref{fig:deepRegressionPlot} shows several active learning experiments in the regression setting. Because standard deep learning datasets are principally designed for classification, we treat SVHN and MNIST data as having $k$ continuous outputs, regressing onto one-hot encodings of their labels. Likewise, there are few active learning algorithms made with regression in mind---the \conf and \badge algorithms are omitted here, as they rely on a notion of uncertainty that requires a classification environment. 

Similar to the classification case, the relative performance of baseline approaches shuffles between environments. Batch active learning in regression is challenging, with simple random sampling being surprisingly effective.  Still, regardless of which baseline is highest performing, \algname consistently performs as well or better on both linear and non-linear regression tasks. Further, because \algname can be reduced to rank-one calculations (Equation~\eqref{eq:regressonCase}), it is relatively efficient, and takes about as long to run as \coreset (Figure~\ref{fig:timePlot}).\looseness=-1

\section{Discussion}
\label{conclusion}
This article studies neural active learning from the theoretical perspective of maximum likelihood estimators, a viewpoint that sheds new light on the performance of previous approaches. We proposed \algname, a generalized, high-performing, and efficient approach to neural active learning that makes use of this perspective, showing that a more classical approach is tractable and effective with modern neural architectures. We demonstrated that \algname is successful in both convex and non-convex scenarios, and for both classification and regression settings.

It is worth noting that, while tractable, the classification version of \algname is more computationally intensive than \badge---roughly $k$ times slower to select a sample to include in a batch (in seconds, though total run times are largely dominated by retraining models after each batch acquisition~\cite{ash2019warm}). This added computation is well justified in active scenarios for which the cost of label acquisition is high relative to the cost of computation. To trade-off between computation requirements and performance, one could estimate the Fisher using only the lowest-norm $r < k$ columns of $V_x$, catching the more descriptive components of the Fisher. We leave the analysis of such an approach as an avenue for future work.\looseness=-1

\section{Acknowledgements}
Sham Kakade acknowledges funding from the National Science Foundation under award \#CCF-1703574.

\newpage
\bibliographystyle{unsrt}
\bibliography{main}

\begin{thebibliography}{10}

\bibitem{med1}
Samuel Budd, Emma~C Robinson, and Bernhard Kainz.
\newblock A survey on active learning and human-in-the-loop deep learning for
  medical image analysis.
\newblock {\em Medical Image Analysis}, 2021.

\bibitem{med2}
Asim Smailagic, Hae~Young Noh, Pedro Costa, Devesh Walawalkar, Kartik
  Khandelwal, Mostafa Mirshekari, Jonathon Fagert, Adri{\'a}n Galdr{\'a}n, and
  Susu Xu.
\newblock Medal: Deep active learning sampling method for medical image
  analysis.
\newblock In {\em International Conference on Machine Learning and
  Applications}, 2018.

\bibitem{cap}
Fabian Stark, Caner Haz{\i}rbas, Rudolph Triebel, and Daniel Cremers.
\newblock Captcha recognition with active deep learning.
\newblock In {\em Workshop on New Challenges in Neural Computation}, 2015.

\bibitem{chaudhuri2015convergence}
Kamalika Chaudhuri, Sham Kakade, Praneeth Netrapalli, and Sujay Sanghavi.
\newblock Convergence rates of active learning for maximum likelihood
  estimation.
\newblock In {\em Advances in Neural Information Processing Systems}, 2015.

\bibitem{ash2019warm}
Jordan~T Ash and Ryan~P Adams.
\newblock On warm-starting neural network training.
\newblock {\em Advances in Neural Information Processing Systems}, 2020.

\bibitem{ash2019deep}
Jordan~T Ash, Chicheng Zhang, Akshay Krishnamurthy, John Langford, and Alekh
  Agarwal.
\newblock Deep batch active learning by diverse, uncertain gradient lower
  bounds.
\newblock {\em International Conference on Learning Representations}, 2020.

\bibitem{S10}
Burr Settles.
\newblock Active learning literature survey.
\newblock {\em University of Wisconsin, Madison}, 2010.

\bibitem{D11}
Sanjoy Dasgupta.
\newblock Two faces of active learning.
\newblock {\em Theoretical computer science}, 2011.

\bibitem{H14}
Steve Hanneke.
\newblock Theory of disagreement-based active learning.
\newblock {\em Foundations and Trends in Machine Learning}, 2014.

\bibitem{sener2018active}
Ozan Sener and Silvio Savarese.
\newblock Active learning for convolutional neural networks: A core-set
  approach.
\newblock In {\em International Conference on Learning Representations}, 2018.

\bibitem{geifman2017deep}
Yonatan Geifman and Ran El-Yaniv.
\newblock Deep active learning over the long tail.
\newblock {\em arXiv:1711.00941}, 2017.

\bibitem{gissin2019discriminative}
Daniel Gissin and Shai Shalev-Shwartz.
\newblock Discriminative active learning.
\newblock {\em arXiv:1907.06347}, 2019.

\bibitem{guo2008discriminative}
Yuhong Guo and Dale Schuurmans.
\newblock Discriminative batch mode active learning.
\newblock In {\em Neural Information Processing Systems}, 2008.

\bibitem{wang2015querying}
Zheng Wang and Jieping Ye.
\newblock Querying discriminative and representative samples for batch mode
  active learning.
\newblock {\em Transactions on Knowledge Discovery from Data}, 2015.

\bibitem{pmlr-v28-chen13b}
Yuxin Chen and Andreas Krause.
\newblock Near-optimal batch mode active learning and adaptive submodular
  optimization.
\newblock In {\em International Conference on Machine Learning}, 2013.

\bibitem{wei2015submodularity}
Kai Wei, Rishabh Iyer, and Jeff Bilmes.
\newblock Submodularity in data subset selection and active learning.
\newblock In {\em International Conference on Machine Learning}, 2015.

\bibitem{batchbald}
Andreas Kirsch, Joost van Amersfoort, and Yarin Gal.
\newblock Batchbald: Efficient and diverse batch acquisition for deep bayesian
  active learning.
\newblock In {\em Advances in Neural Information Processing Systems}, 2019.

\bibitem{TK01}
Simon Tong and Daphne Koller.
\newblock Support vector machine active learning with applications to text
  classification.
\newblock {\em Journal of Machine Learning Research}, 2001.

\bibitem{schohn2000less}
Greg Schohn and David Cohn.
\newblock Less is more: Active learning with support vector machines.
\newblock In {\em International Conference on Machine Learning}, 2000.

\bibitem{tur2005combining}
Gokhan Tur, Dilek Hakkani-T{\"u}r, and Robert~E Schapire.
\newblock Combining active and semi-supervised learning for spoken language
  understanding.
\newblock {\em Speech Communication}, 2005.

\bibitem{A^2}
Maria{-}Florina Balcan, Alina Beygelzimer, and John Langford.
\newblock Agnostic active learning.
\newblock In {\em International Conference on Machine Learning}, 2006.

\bibitem{settles2008multiple}
Burr Settles, Mark Craven, and Soumya Ray.
\newblock Multiple-instance active learning.
\newblock In {\em Advances in Neural Information Processing Systems}, 2008.

\bibitem{gal2017deep}
Yarin Gal, Riashat Islam, and Zoubin Ghahramani.
\newblock Deep bayesian active learning with image data.
\newblock In {\em International Conference on Machine Learning}, 2017.

\bibitem{ducoffe2018adversarial}
Melanie Ducoffe and Frederic Precioso.
\newblock Adversarial active learning for deep networks: a margin based
  approach.
\newblock {\em arXiv:1802.09841}, 2018.

\bibitem{ensembles}
William~H Beluch, Tim Genewein, Andreas N{\"u}rnberger, and Jan~M K{\"o}hler.
\newblock The power of ensembles for active learning in image classification.
\newblock In {\em IEEE Conference on Computer Vision and Pattern Recognition},
  2018.

\bibitem{huang2010active}
Sheng-Jun Huang, Rong Jin, and Zhi-Hua Zhou.
\newblock Active learning by querying informative and representative examples.
\newblock In {\em Neural Information Processing Systems}, 2010.

\bibitem{baram2004online}
Yoram Baram, Ran~El Yaniv, and Kobi Luz.
\newblock Online choice of active learning algorithms.
\newblock {\em Journal of Machine Learning Research}, 2004.

\bibitem{hsu2015active}
Wei-Ning Hsu and Hsuan-Tien Lin.
\newblock Active learning by learning.
\newblock In {\em AAAI Conference on Artificial Intelligence}, 2015.

\bibitem{vandervaart2000asymptotic}
A.~W. van~der Vaart.
\newblock {\em Asymptotic Statistics}.
\newblock Cambridge University Press, 2000.

\bibitem{lehmann1998theory}
E.~L. Lehmann and G.~Casella.
\newblock {\em Theory of Point Estimation}.
\newblock Springer, 1998.

\bibitem{loss5}
Jamshid Sourati, Murat Akcakaya, Todd~K Leen, Deniz Erdogmus, and Jennifer~G
  Dy.
\newblock Asymptotic analysis of objectives based on {F}isher information in
  active learning.
\newblock {\em The Journal of Machine Learning Research}, 2017.

\bibitem{loss1}
Tong Zhang and F~Oles.
\newblock The value of unlabeled data for classification problems.
\newblock In {\em International Conference on Machine Learning}, 2000.

\bibitem{loss2}
Steven~CH Hoi, Rong Jin, Jianke Zhu, and Michael~R Lyu.
\newblock Batch mode active learning and its application to medical image
  classification.
\newblock In {\em International Conference on Machine Learning}, 2006.

\bibitem{loss3}
Quanquan Gu, Tong Zhang, and Jiawei Han.
\newblock Batch-mode active learning via error bound minimization.
\newblock In {\em Uncertainty and Artificial Intelligence}, 2014.

\bibitem{mnist}
Yann LeCun, L{\'e}on Bottou, Yoshua Bengio, Patrick Haffner, et~al.
\newblock Gradient-based learning applied to document recognition.
\newblock {\em IEEE}, 1998.

\bibitem{sourati2018active}
Jamshid Sourati, Ali Gholipour, Jennifer~G Dy, Sila Kurugol, and Simon~K
  Warfield.
\newblock Active deep learning with fisher information for patch-wise semantic
  segmentation.
\newblock In {\em Deep Learning in Medical Image Analysis and Multimodal
  Learning for Clinical Decision Support}, pages 83--91. Springer, 2018.

\bibitem{svhn}
Yuval Netzer, Tao Wang, Adam Coates, Alessandro Bissacco, Bo~Wu, and Andrew~Y
  Ng.
\newblock Reading digits in natural images with unsupervised feature learning.
\newblock 2011.

\bibitem{cifar}
Alex Krizhevsky.
\newblock Learning multiple layers of features from tiny images.
\newblock Technical report, Citeseer, 2009.

\bibitem{science1}
Mohammed AlQuraishi.
\newblock Alphafold at casp13.
\newblock {\em Bioinformatics}, 2019.

\bibitem{science2}
Alex Beatson, Jordan Ash, Geoffrey Roeder, Tianju Xue, and Ryan~P Adams.
\newblock Learning composable energy surrogates for pde order reduction.
\newblock {\em Advances in Neural Information Processing Systems}, 2020.

\bibitem{ginosar2015century}
Shiry Ginosar, Kate Rakelly, Sarah Sachs, Brian Yin, and Alexei~A Efros.
\newblock A century of portraits: A visual historical record of american high
  school yearbooks.
\newblock In {\em Proceedings of the IEEE International Conference on Computer
  Vision Workshops}, pages 1--7, 2015.

\bibitem{kaggle}
Kaggle.
\newblock Historical temperature, precipitation, humidity, and windspeed for
  austin, texas.
\newblock \url{https://www.kaggle.com/grubenm/austin-weather},.

\end{thebibliography}

\newpage

\appendix
\onecolumn

\section{Theoretical Details}
\label{app:details}

\subsection{Bayesian Linear Regression}

Recall the Bayesian Linear Regression setup from Section \ref{sec:probPer}. We have a $d$-dimensional linear regression problem where we assume the parameter vector $\theta^\star$ has prior distribution $\Ncal(0, \lambda^{-1} I)$ and the conditional distribution $D_{\Ycal \mid \Xcal}(x) = p(\cdot \mid x,{\theta}^\star) = \Ncal(\langle\theta^\star,x\rangle, \sigma^2)$ is Gaussian. For completeness we present the proof for the exact expression of the Bayes risk.
\begin{lemma}
For a given set of points $S \subset U$ from the unlabeled dataset,
\[
\mathrm{BayesRisk(S)} = \sigma^2 \tr(\Lambda_S^{-1}\Sigma)
\]
where $\Lambda_S = \sum_{x \in S}xx^\top + \lambda \sigma^2 I$.
\label{bayesRisk}
\end{lemma}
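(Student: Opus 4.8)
The plan is to proceed by a direct bias--variance computation, writing the ridge/MAP estimator in closed form and decomposing the error into a prior (bias) term and a noise (variance) term. Write $S = \{x_j\}$ with design matrix $X_S$ (rows $x_j^\top$), label vector $y_S = X_S\theta^\star + \varepsilon$ with $\varepsilon \sim \Ncal(0,\sigma^2 I)$ independent of $\theta^\star \sim \Ncal(0,\lambda^{-1}I)$. Since $\hat\theta_S$ minimizes the penalized squared loss with penalty $\lambda\sigma^2\|\theta\|_2^2$, first-order optimality gives the closed form $\hat\theta_S = \Lambda_S^{-1} X_S^\top y_S$ with $\Lambda_S = X_S^\top X_S + \lambda\sigma^2 I$, matching the definition in the statement. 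The first step is then to substitute $y_S$ and use the identity $X_S^\top X_S = \Lambda_S - \lambda\sigma^2 I$ to obtain the clean expression
\[
\hat\theta_S - \theta^\star = -\lambda\sigma^2\,\Lambda_S^{-1}\theta^\star + \Lambda_S^{-1} X_S^\top \varepsilon .
\]

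Next I would plug this into $\mathrm{BayesRisk}(S) = \EE[(\hat\theta_S-\theta^\star)^\top \Sigma(\hat\theta_S-\theta^\star)]$ and expand the quadratic form. The cross term is linear in $\varepsilon$ and in $\theta^\star$, both mean zero and independent, so it vanishes in expectation; this leaves two trace terms after using $\EE[\theta^\star\theta^{\star\top}] = \lambda^{-1}I$ and $\EE[\varepsilon\varepsilon^\top] = \sigma^2 I$ together with the cyclic property of the trace:
\[
\mathrm{BayesRisk}(S) = \lambda\sigma^4\,\tr\!\big(\Lambda_S^{-1}\Sigma\Lambda_S^{-1}\big) \;+\; \sigma^2\,\tr\!\big(\Lambda_S^{-1}\Sigma\Lambda_S^{-1} X_S^\top X_S\big).
\]
The final step is to substitute $X_S^\top X_S = \Lambda_S - \lambda\sigma^2 I$ in the second term, which produces $\sigma^2\tr(\Lambda_S^{-1}\Sigma) - \lambda\sigma^4\tr(\Lambda_S^{-1}\Sigma\Lambda_S^{-1})$; this exactly cancels the first term and yields $\mathrm{BayesRisk}(S) = \sigma^2\tr(\Lambda_S^{-1}\Sigma)$.

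There is no serious obstacle here — the argument is a routine Gaussian bias--variance calculation — so the only things to be careful about are (i) verifying that the MAP estimate really is the stated ridge solution (a short completing-the-square or posterior-mean argument, using that for a Gaussian prior and Gaussian likelihood the posterior is $\Ncal(\hat\theta_S,\sigma^2\Lambda_S^{-1})$), and (ii) tracking the $\sigma^2$ factors through the penalty $\lambda\sigma^2$ so the cancellation in the last step is exact. As an alternative (and cross-check) I could skip the explicit error decomposition entirely: since the posterior covariance of $\theta^\star$ given $S$ is exactly $\sigma^2\Lambda_S^{-1}$ and is independent of the observed labels, and the posterior mean $\hat\theta_S$ is the Bayes-optimal estimator under the quadratic loss $\|\cdot\|_\Sigma^2$, the Bayes risk equals $\tr(\Sigma\cdot \mathrm{Cov}(\theta^\star\mid S)) = \sigma^2\tr(\Sigma\Lambda_S^{-1})$. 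I would present the direct computation as the main proof and perhaps remark on the posterior-covariance viewpoint, since it also makes transparent the claim in the main text that the risk does not depend on the realized labels.
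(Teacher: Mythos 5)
Your proposal is correct, and it reaches the result by the same basic route as the paper (closed-form ridge/MAP estimator, expand the $\Sigma$-weighted quadratic error, use trace cyclicity and the prior and noise covariances), but your organization of the computation is genuinely cleaner. The paper first computes the risk conditional on $\theta^\star$, which requires expanding $\|\hat\theta - \theta^\star\|_\Sigma^2$ into four terms and performing two rounds of term-by-term cancellation via $\bar\Lambda = \Lambda - \lambda\sigma^2 I$ before finally averaging over the prior; you instead substitute $X_S^\top X_S = \Lambda_S - \lambda\sigma^2 I$ at the level of the error vector itself, obtaining $\hat\theta_S - \theta^\star = -\lambda\sigma^2\Lambda_S^{-1}\theta^\star + \Lambda_S^{-1}X_S^\top\varepsilon$, and then take the expectation over $\theta^\star$ and $\varepsilon$ jointly, so the cross term dies by independence and only one cancellation remains. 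Both computations are equivalent, but your bias--variance split makes the label-independence and the final cancellation transparent, at the small cost of invoking independence of $\varepsilon$ and $\theta^\star$ explicitly (which holds in this model). Your remarked alternative---that the posterior of $\theta^\star$ is $\Ncal(\hat\theta_S, \sigma^2\Lambda_S^{-1})$ with label-independent covariance, so the Bayes risk of the posterior mean under the $\Sigma$-weighted loss is $\tr(\Sigma\cdot\sigma^2\Lambda_S^{-1})$---is a genuinely different and shorter argument that the paper does not use, and it explains conceptually why the Bayes risk cannot depend on $Y$; if you present it as the main proof you should briefly justify that the posterior mean minimizes the weighted quadratic Bayes risk and that the stated MAP/ridge estimator coincides with it.
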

\begin{proof}
For any set of labeled data $\{x_j,y_j\}_{j=1}^m$, we often use matrix notation where we let $X \in \RR^{m \times d}$ collect the feature vectors as rows and $Y \in \RR^{m}$ collect the responses. The posterior distribution of $\theta^\star$ given $(X,Y)$ is,
\begin{align*}
    \rho(\theta^\star|Y, X) &\propto p(\theta^\star) \cdot p(Y | X, \theta^\star)\\
    & \propto \exp \left(-\frac{\lambda}{2}\|\theta^\star\|^2\right) \cdot \exp\left(- \frac{1}{2\sigma^2}(Y - X\theta^\star)^\top (Y - X \theta^\star)\right)\\
    & \propto \exp \left(- \frac{1}{2\sigma^2}(Y - X\theta^\star)^\top (Y - X \theta^\star) -\frac{\lambda}{2}\|\theta^\star\|^2\right).
\end{align*}
The MAP estimate is therefore,
\begin{align*}
    \hat{\theta} &= \argmax_{\theta}\exp \left(- \frac{1}{2\sigma^2}(Y - X\theta)^\top (Y - X \theta) -\frac{\lambda}{2}\|\theta\|^2\right)\\
    &= \argmin_{\theta} (Y - X\theta)^\top (Y - X \theta) +\frac{\lambda \sigma^2}{2}\|\theta\|^2\\
    &= (X^\top X + \lambda \sigma^2I)^{-1}X^\top y.
\end{align*}
The last identity verifies that MAP estimate is precisely the ridge regression solution, with ridge regularizer $\lambda\sigma^2$. 

Let us now define $\Lambda = X^\top X + \lambda \sigma^2 I$ and $\bar{\Lambda} = X^\top X$. Notice that $\Lambda = \bar{\Lambda} + \lambda\sigma^2 I$ which we will use repeatedly. Let $\Sigma$ denote the covariance matrix of the unlabeled data. The risk is given by,
\begin{align*}
    \textrm{Risk}(\hat{\theta}; X, \theta^\star) &= \EE_y[ \| \hat{\theta} - \theta^\star\|_{\Sigma}^2]\\
    & = \EE\left[ y^\top X \Lambda^{-1}\Sigma\Lambda^{-1}X^\top y\right] - 2\EE\left[\frac{y^\top X}{n} \Lambda^{-1}\Sigma\theta^\star\right] + {\theta^\star}^\top \Sigma \theta^\star \\
    & = {\theta^\star}^\top \bar{\Lambda}\Lambda^{-1}\Sigma \Lambda^{-1}\bar{\Lambda} \theta^\star + \EE \epsilon^\top X \Lambda^{-1}\Sigma\Lambda^{-1}X^\top\epsilon - 2 {\theta^\star}^\top\bar{\Lambda}\Lambda^{-1}\Sigma\theta^\star + {\theta^\star}^\top\Sigma\theta^\star \\
    & = {\theta^\star}^\top \bar{\Lambda}\Lambda^{-1}\Sigma \Lambda^{-1}\bar{\Lambda} \theta^\star + \sigma^2 \tr\left( \Lambda^{-1}\Sigma\Lambda^{-1}\bar{\Lambda} \right) - 2 {\theta^\star}^\top\bar{\Lambda}\Lambda^{-1}\Sigma\theta^\star + {\theta^\star}^\top\Sigma\theta^\star    
\end{align*}
Now write ${\theta^\star}^\top\Sigma\theta^\star = {\theta^\star}^\top \Lambda\Lambda^{-1}\Sigma\theta^\star = {\theta^\star}^\top \bar{\Lambda}\Lambda^{-1}\Sigma\theta^\star + \lambda\sigma^2\cdot{\theta^\star}^\top \Lambda^{-1}\Sigma \theta^\star$, and observe that the first term here cancels with one of the negative terms above. This gives
\begin{align}
    \textrm{Risk}(\hat{\theta}; X, \theta^\star) = {\theta^\star}^\top \bar{\Lambda}\Lambda^{-1}\Sigma \Lambda^{-1}\bar{\Lambda} \theta^\star + \sigma^2 \tr\left( \Lambda^{-1}\Sigma\Lambda^{-1}\bar{\Lambda} \right) - {\theta^\star}^\top\bar{\Lambda}\Lambda^{-1}\Sigma\theta^\star + \lambda\sigma^2 {\theta^\star}^\top\Lambda^{-1}\Sigma\theta^\star
\end{align}
Now we do the same thing on the first term: ${\theta^\star}^\top \bar{\Lambda}\Lambda^{-1}\Sigma\Lambda^{-1}\bar{\Lambda}\theta = {\theta^\star}^\top \bar{\Lambda}\Lambda^{-1}\Sigma - \lambda\sigma^2 {\theta^\star}^\top \bar{\Lambda}\Lambda^{-1}\Sigma \Lambda^{-1}\theta^\star$. Plugging this in cancels out the other negative term, yielding
\begin{align*}
    \textrm{Risk}(\hat{\theta}; X, \theta^\star) & = \sigma^2 \tr(\Lambda^{-1}\Sigma\Lambda^{-1}\bar{\Lambda}) + \lambda\sigma^2\left( {\theta^\star}^\top \Lambda^{-1}\Sigma\theta^\star - {\theta^\star}^\top\bar{\Lambda}\Lambda^{-1}\Sigma\Lambda^{-1}\theta^\star \right)\\
    & = \sigma^2 \tr(\Lambda^{-1}\Sigma\Lambda^{-1}\bar{\Lambda}) + \lambda^2\sigma^4 {\theta^\star}^\top \Lambda^{-1}\Sigma\Lambda^{-1}\theta^\star
\end{align*}
The Bayes risk is an expectation of this quantity taking into account the randomness in $\theta^\star$. Taking this expectation gives
\begin{align*}
    \textrm{BayesRisk}(X) &= \sigma^2 \tr\left(\Lambda^{-1}\Sigma\Lambda^{-1}\bar{\Lambda}\right) + \lambda^2\sigma^4 \tr\left(\Lambda^{-1} \Sigma \Lambda^{-1} \frac{I}{\lambda}\right)\\
    & = \sigma^2\tr(\Lambda^{-1}\Sigma) - \lambda \sigma^4 \tr(\Lambda^{-1}\Sigma \Lambda^{-1}) + \lambda\sigma^4\tr(\Lambda^{-1}\Sigma\Lambda^{-1}) \\
    & = \sigma^2\tr(\Lambda^{-1}\Sigma).
\end{align*}
Since the Bayes risk does not depend on the labels $Y$, setting $X$ to be $S$ (with the obvious mapping from matrices to subsets of features) gives us the desired result.
\end{proof}

\subsection{Fisher Information for Multi-class Logistic Regression}
Consider the $k$-class logistic regression model,
\[
\Pr[y|x] = \frac{\exp\left(w_{y}^Tx\right)}{\sum_{i=1}^k\exp\left(w_{i}^Tx\right)}
\]
where $w_i$ is the $i^{\text{th}}$ row of $W$. We have the log-likelihood for the model,
\[
\ell(W; x, y) = - w_{y}^Tx + \log \left(\sum_{i=1}^k\exp\left(w_{i}^Tx\right) \right).
\]
Let us compute the partial derivatives,
\begin{align*}
    \frac{\partial \ell(W; x, y)}{\partial w_p} &= - \one[y = p]x + \frac{\exp\left(w_{p}^Tx\right)x}{\sum_{i=1}^k\exp\left(w_{i}^Tx\right)}\\
    \frac{\partial^2 \ell(W; x, y)}{\partial w_p^2} &= \frac{\exp\left(w_{p}^Tx\right)xx^T}{\sum_{i=1}^k\exp\left(w_{i}^Tx\right)} - \frac{\exp\left(2w_{p}^Tx\right)xx^T}{\left(\sum_{i=1}^k\exp\left(w_{i}^Tx\right)\right)^2}\\
    \frac{\partial^2 \ell(W; x, y)}{\partial w_p \partial w_q} &= - \frac{\exp\left(w_{p}^Tx\right)\exp\left(w_{q}^Tx\right)xx^T}{\left(\sum_{i=1}^k\exp\left(w_{i}^Tx\right)\right)^2}.
\end{align*}
Let $\pi$ be the vector of probabilities such that $\pi_p = \frac{\exp\left(w_{p}^Tx\right)}{\sum_{i=1}^k\exp\left(w_{i}^Tx\right)}$. Then we have,
\[
\nabla^2 \ell(W;x, y) = x x^T  \otimes (\diag(\pi) - \pi \pi^T) .
\]
Implying the Fisher information is
\[
I(x; W) =  x x^T  \otimes (\diag(\pi) - \pi \pi^T).
\]

\subsection{Fisher Information for Multi-output Regression}
\label{fisherRegression}
Consider the $k$-output regression model $y = Wx + \mathcal{N}(0, \Sigma)$. We have,
\[
\Pr[y|x] = \frac{\exp\left(-\frac{1}{2}(y - Wx)^T\Sigma^{-1}(y - Wx)\right)}{\sqrt{(2\pi)^k \det(\Sigma)}}.
\]
We have the log-likelihood for the model,
\[
\ell(W; x, y) = -\frac{1}{2}(y - Wx)^T\Sigma^{-1}(y - Wx) -\frac{1}{2}\log \left((2\pi)^k \det(\Sigma)\right)
\]
Note that the Fisher Information is equivalent to the negative of the hessian of $\ell$ with respect to $W$. Let us calculate the Fisher Information $I(x; W)$,
\begin{align*}
    &\nabla \ell(W;x, y) = \Sigma^{-1} (y - Wx) x^T\\
     \implies &\nabla^2 \ell(W;x, y) = - xx^T \otimes \Sigma^{-1}\\
     \implies &I(x; W) = -\EE_y\left[\nabla^2 \ell(W;x, y)\right] =xx^T \otimes \Sigma^{-1} 
\end{align*}
The above follows from standard matrix differentiation properties.

\subsubsection{The Fisher Objective For Regression}
\label{regressObj}

Recall from Equation~\ref{eq:mle_opt} the batch selection objective $\argmin_{S \subset U, |S| \leq B} \tr\left( \left(\sum_{x \in S} I(x;\theta)\right)^{-1} I_U(\theta) \right)$. Given the above calculation, this can be computed using properties of the Kronecker product as

\begin{align*}
    &\tr\left( \left(\sum_{x \in S} I(x;\theta)\right)^{-1} I_U(\theta) \right)\\
    &=\tr\left( \left( \left (\sum_{x \in S} xx^\top \right) \otimes \Sigma^{-1}\right)^{-1} \left( \sum_{x \in U} xx^\top \right) \otimes \Sigma^{-1} \right)\\
      &=\tr \left( \left( \left(\sum_{x \in S} xx^\top \right)^{-1} \otimes \Sigma \right) \left( \sum_{x \in U} xx^\top \right) \otimes \Sigma^{-1} \right)\\
       &=\tr \left( \left( \left(\sum_{x \in S} xx^\top \right)^{-1} \left( \sum_{x \in U} xx^\top\right)  \right) \otimes  \Sigma \Sigma^{-1} \right)\\
    &=\tr \left( \left( \left(\sum_{x \in S} xx^\top \right)^{-1} \left( \sum_{x \in U} xx^\top\right)  \right) \otimes  I \right)\\
     &=k \tr \left(\left(\sum_{x \in S} xx^\top \right)^{-1} \left( \sum_{x \in U} xx^\top\right)\right).\\
\end{align*}

\subsection{Finding the Minimizing Sample in Regression}
\label{minRegression}
In the regression setting, we find Equation~\eqref{algLine} in a similar way to the classification setting. If we define $F := \frac{1}{|U|} \sum_{x \in U} (x^L) (x^L)^\top$, computing the contribution of a single point in Algorithm~\ref{alg:main} can be reduced to
\begin{align*}
& \argmin_x \tr\left(k \left(\hat{M}_i + x^L(x^L)^\top\right)^{-1} F\right) \\
&=\argmin_x \tr\left(\left(\hat{M}_i^{-1} - \hat{M}_i^{-1}x a^{-1}  x^\top M_i^{-1}\right)F\right)\\
&=\argmin_x \tr\left(\hat{M}_i^{-1} F\right) - \tr \left((x^L)^\top M_i^{-1} F \hat{M}_i^{-1}x^La^{-1}\right)\\
&=\argmax_x \tr \left(V_x^\top \hat{M}_i^{-1} F \hat{M}_i^{-1}V_xa^{-1}\right),
\end{align*}

\subsection{Optimality of Greedy}
\newcommand{\opt}{\mathrm{Opt}} 
\newcommand{\val}{\mathrm{Val}} 

We verify the optimality of the
forward greedy algorithm in a simple setting, where the distribution
is supported on the standard basis elements $e_{1},\ldots,e_d \in
\RR^d$ with probabilities $p_1,\ldots,p_d$. To fix ideas, we focus on
the trace optimization problem, essentially
optimizing~\pref{eq:bayes_regression}, in the ``infinite unlabeled
data'' setting. Formally, with a batch size $B$ and regularizer
$\lambda > 0$, we define
\begin{align}
\opt = \min_{n \in \mathbb{N}^d: \sum_{i=1}^d n_i \leq B} \val(n), \qquad \val(n) = \sum_{i=1}^d
\frac{p_i}{n_i + \lambda}. \label{eq:ortho_trace_opt}
\end{align}
Observe that this is equivalent to the right hand side
of~\pref{eq:bayes_regression} since the second moment of the features
$\Sigma = \sum_{i=1}^d p_i e_ie_i^\top$ and if we select $n_i$ copies
of $e_i$ in the batch then $\Lambda = \sum_{i=1}^d n_i e_ie_i^\top +
\lambda I$. Since both matrices are simultaneously diagonalizable, we
can simplify the trace expression as in~\pref{eq:ortho_trace_opt}. 

\paragraph{Greedy algorithm.}
The greedy algorithm starts with $n^{(0)}_i = 0$ for all $i$. At time
$t$ we have a partial solution $n^{(t)}$ satisfying $\sum_{i=1}^d
n_i^{(t)} = t$. We compute the index $i_{t+1}$ as
\begin{align}
i_{t+1} \gets \argmin_{i \in [d]}\left\{ \frac{p_i}{n_i^{(t)}+1+\lambda} - \frac{p_i}{n_i^{(t)}+\lambda} \right\}.
\end{align}
(Note that the terms in the minimization are all negative.)  We set
$n^{(t+1)} = n^{(t)}+ e_{i_{t+1}}$ and we stop with $n^{\textrm{Grd}}
= n^{(B)}$. We break ties arbitrarily.

\begin{proposition}
For any distribution $p$ and any batch size $B$, we have
$\val(n^{\textrm{Grd}}) = \opt$.
\end{proposition}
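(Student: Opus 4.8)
The plan is to reduce the problem to selecting the $B$ largest elements from a fixed multiset of ``marginal gains,'' and then to observe that both the greedy algorithm and the optimum perform exactly this selection. Define, for each $i \in [d]$ and each integer $j \geq 1$, the marginal gain
\begin{align}
v_{i,j} := \frac{p_i}{j - 1 + \lambda} - \frac{p_i}{j + \lambda} = \frac{p_i}{(j-1+\lambda)(j+\lambda)} \geq 0. \nonumber
\end{align}
By telescoping, $\sum_{j=1}^{n_i} v_{i,j} = \frac{p_i}{\lambda} - \frac{p_i}{n_i+\lambda}$, so for any $n \in \mathbb{N}^d$ we have $\val(n) = \sum_{i=1}^d \frac{p_i}{\lambda} - G(n)$ where $G(n) := \sum_{i=1}^d \sum_{j=1}^{n_i} v_{i,j}$. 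Hence minimizing $\val(n)$ subject to $\sum_i n_i \leq B$ is equivalent to maximizing $G(n)$ over the same set, and since every $v_{i,j} \geq 0$ there is a maximizer with $\sum_i n_i = B$ (pad with arbitrary coordinates). It therefore suffices to show $G(n^{\textrm{Grd}}) = \max\{G(n) : \sum_i n_i = B\}$.

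The crucial structural fact is that for each fixed $i$ the sequence $j \mapsto v_{i,j}$ is non-increasing, because the denominator $(j-1+\lambda)(j+\lambda)$ increases in $j$ (here $\lambda > 0$ is used). Consequently, for any feasible $n$ the multiset $\{v_{i,j} : i \in [d],\, 1 \leq j \leq n_i\}$ is a $B$-element sub-multiset of $\mathcal{V} := \{v_{i,j} : i \in [d],\, j \geq 1\}$ that is \emph{prefix-closed} (if $v_{i,j}$ is included then so is $v_{i,j'}$ for all $j' \leq j$), and therefore $G(n) \leq (\text{sum of the } B \text{ largest elements of } \mathcal{V})$. Conversely, one can choose a $B$-element sub-multiset of $\mathcal{V}$ attaining this maximal sum that is itself prefix-closed: whenever such a set contains $v_{i,j}$ but omits some $v_{i,j'}$ with $j' < j$, monotonicity gives $v_{i,j'} \geq v_{i,j}$, so swapping preserves both size and total; and any prefix-closed set of size $B$ is exactly $\{v_{i,j} : 1 \leq j \leq n_i\}$ for the feasible vector $n$ recording the prefix lengths. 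Thus $\max\{G(n) : \sum_i n_i = B\}$ equals the sum of the $B$ largest elements of $\mathcal{V}$.

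It remains to check that greedy realizes this top-$B$ selection. I would maintain the invariant that after $t$ steps the chosen multiset $\{v_{i,j} : 1 \leq j \leq n_i^{(t)}\}$ is prefix-closed; this holds at $t = 0$ and is preserved since each step extends one sequence $i_{t+1}$ by its next element $v_{i_{t+1},\, n_{i_{t+1}}^{(t)}+1}$. Under this invariant, for each $i$ the first unused index is $n_i^{(t)}+1$, so $v_{i,\, n_i^{(t)}+1}$ is the largest not-yet-selected element of sequence $i$; hence the greedy rule $i_{t+1} \in \argmin_i\{-v_{i,\, n_i^{(t)}+1}\}$ (which is exactly the rule $\argmin_i\{\frac{p_i}{n_i+1+\lambda} - \frac{p_i}{n_i+\lambda}\}$ written in the algorithm) picks a globally largest element of $\mathcal{V}$ not chosen so far. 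A standard exchange argument then shows that $B$ repetitions of ``take a globally largest remaining element'' produce a sub-multiset whose sum equals that of the $B$ largest elements of $\mathcal{V}$, so $G(n^{\textrm{Grd}})$ is maximal and $\val(n^{\textrm{Grd}}) = \opt$. The only real subtlety, and the step I would be most careful about, is bookkeeping around ties: several $v_{i,j}$ may equal the $B$-th largest value, so ``the top $B$'' is not unique; optimality of greedy must be phrased in terms of achieving the optimal \emph{sum}, and one must check that greedy's arbitrary tie-breaking still yields a prefix-closed set of that sum. Everything else is the telescoping identity and the monotonicity of $v_{i,j}$.
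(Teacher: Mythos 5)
Your proof is correct, but it takes a different route from the paper's. You rewrite the objective via the telescoping identity $\sum_{j=1}^{n_i} v_{i,j} = \frac{p_i}{\lambda} - \frac{p_i}{n_i+\lambda}$ with $v_{i,j} = \frac{p_i}{(j-1+\lambda)(j+\lambda)}$, turning the minimization of $\val$ into a separable maximization of a sum of per-coordinate marginal gains; since each sequence $j \mapsto v_{i,j}$ is non-increasing, the problem reduces to selecting the $B$ largest elements from a union of $d$ non-increasing sequences, which the greedy rule does by construction (its selected set is prefix-closed, so its next candidate in each coordinate is that coordinate's largest remaining gain). The paper instead argues by induction on the budget: assuming $n^{(t)}$ is optimal for every budget $\tau \le t$, it compares $n^{(t+1)}$ to an arbitrary competitor $s$, splitting into the case $\sum_i s_i < t+1$ (handled by monotonicity of $\val$) and the case $\sum_i s_i = t+1$ (handled by peeling one unit off a coordinate with $s_j > n_j^{(t+1)}$, a diminishing-returns inequality, and the optimality of the greedy index $i_{t+1}$). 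Your decomposition is arguably more transparent — it makes the exact optimality of greedy an instance of the classical top-$B$ selection fact and explains structurally why no approximation loss occurs — while the paper's induction works directly on $\val$ without introducing the gain multiset and generalizes verbatim (as the paper notes) to the $\log\det$ objective by swapping in its own marginal gains. Two small points you should tighten: the "top $B$ elements" of the infinite multiset $\mathcal{V}$ is well-defined because only finitely many $v_{i,j}$ exceed any positive threshold (or simply restrict attention to $j \le B$); and in the swap step, moving from $v_{i,j}$ to an omitted $v_{i,j'}$ with $j' < j$ does not decrease the total, hence by maximality leaves it unchanged, and the repair terminates because, e.g., the sum of selected indices strictly decreases — neither issue affects correctness.
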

Note that essentially the same proof can be used to establish that the
greedy algorithm for maximizing the $\log \det(\cdot)$ objective is
also optimal, for that objective. However the $\log\det(\cdot)$
objective is quite different from~\pref{eq:ortho_trace_opt} even in
this special case.

\begin{proof}
The proof is inductive in nature, where the base case is that
$n^{(0)}$ is optimal with a batch size of $B=0$, which is
obvious. Now, assume that $n^{(\tau)}$ is the optimal solution with
batch size $\tau$, for all $\tau \leq t$ and we proceed to show that
$n^{(t+1)}$ is also optimal for batch size $t+1$. To do so consider
any other solution $s\in \NN^d$ with $\sum_i s_i \leq t+1$ and $s \ne
n^{(t+1)}$.

\paragraph{Case 1.}
The easier case is when $\sum_i s_i = \tau < t+1$. In this case, we
know that $\val(n^{(\tau)}) \leq \val(s)$ due to the optimality of
$n^{(\tau)}$. Additionally, we have the following monotonicity property:
\begin{align*}
\val(n^{(t+1)}) = \sum_{i=1}^d\frac{p_i}{n^{(t+1)}_i + \lambda} \leq \sum_{i=1}^d\frac{p_i}{n^{(\tau)}_i + \lambda} = \val(n^{(\tau)}). 
\end{align*}

\paragraph{Case 2.}
In case two, observe that
\begin{align*}
\val(s) &= \sum_{i=1}^d\frac{p_i}{s_i + \lambda} \geq \max_{j: s_j>0} \left\{\frac{p_j}{s_j+\lambda} - \frac{p_j}{s_j-1+\lambda} + \sum_{i \ne j} \frac{p_i}{s_i+\lambda} + \frac{p_j}{s_j-1+\lambda}\right\}\\
& = \max_{j: s_j>0} \left\{\frac{p_j}{s_j+\lambda} - \frac{p_j}{s_j-1+\lambda} + \val(s-e_j)\right\}\\
& \geq \max_{j: s_j>0} \left\{\frac{p_j}{s_j+\lambda} - \frac{p_j}{s_j-1+\lambda}\right\} + \val(n^{(t)}).
\end{align*}
Here we use the notation $s-e_j$ to be candidate solution of size $t$
that is identical to $s$ on all coordinates and one less on coordinate
$j$. The inequality is by the inductive hypothesis. 

Next we relate $n^{(t)}$ to $n^{(t+1)}$. To avoid nested subscripts,
we use the notation $p_\star$ to denote
$p_{i_{t+1}}$ with analogous definitions for $n_\star^{(t+1)}$.
\begin{align*}
\val(n^{(t)}) = \val(n^{(t+1)}) + \frac{p_{\star}}{n_{\star}^{(t+1)}-1+\lambda} - \frac{p_{\star}}{n^{(t+1)}_{\star} + \lambda}
\end{align*}
So we need to show
\begin{align*}
\max_{j: s_j>0} \left\{\frac{p_j}{s_j+\lambda} - \frac{p_j}{s_j-1+\lambda}\right\} + \frac{p_{\star}}{n_{\star}^{(t+1)}-1+\lambda} - \frac{p_{\star}}{n^{(t+1)}_{\star} + \lambda} \geq 0.
\end{align*}
To do this, we we will relate the terms in the first expression
involving $s$ to terms involving $n^{(t+1)}$, via the following
diminishing returns property
\begin{align*}
\forall x,y > 0: y \leq x \Leftrightarrow \frac{1}{x+\lambda} - \frac{1}{x-1+\lambda} \geq \frac{1}{y+\lambda}- \frac{1}{y-1+\lambda}
\end{align*}
Now, since both $s$ and $n^{(t+1)}$ sum to $t+1$ and they are not
equal, there must exist some coordinate $j$ for which $s_j >
n^{(t+1)}_j$. Using this coordinate $j$ in the max and the applying
the diminishing returns property and finally the optimality property
for index $i_{t+1}$ establishes the induction:
\begin{align*}
\max_{j: s_j > 0}\left\{\frac{p_j}{s_j+\lambda} - \frac{p_j}{s_j-1+\lambda}\right\} &\geq \frac{p_j}{s_j+\lambda} - \frac{p_j}{s_j-1+\lambda} \geq \frac{p_j}{n_j^{(t+1)}+\lambda} - \frac{p_j}{n_j^{(t+1)}-1+\lambda} \\
& \geq \frac{p_j}{n_{\star}^{(t+1)} + \lambda} - \frac{p_j}{n_{\star}^{(t+1)}-1+\lambda}. \tag*\qedhere
\end{align*}
\end{proof}

\section{Determinantal Algorithms}

This section describes the determinantal sampling algorithms used in Section~\ref{badgeComp}. Recall that $g_x$ refers to the gradient embedding used by \badge, a $dk$-dimensional vector corresponding to the gradient that would be obtained in the last layer if the model's most likely prediction were correct, $g_x = \nabla \ell(x,y=\hat{y};\theta^L)$, $\hat{y} = \argmax f(x; \theta)$.

Further recall that $V_x$ is the $kd \times k$ matrix of gradients scaled by output probabilities, such that $I(x; \theta^L) = V_x V_x^\top$. As we mention in Section~\ref{badgeComp}, $g_x$ is effectively one column of the $V_x$ matrix, but not scaled by the corresponding class probability. 

For Algorithm~\ref{alg:det2}, the maximization in line 6 can be efficiently computed via the generalized matrix-determinant lemma,
\begin{align*}
    \det(M_i + V_xV_x^\top) = \det(M_i)\det(I + V_x^\top M_i^{-1}V_x).
\end{align*}
For Algorithm~\ref{alg:det1}, this simplifies to
\begin{align*}
    \det(M_i + g_xg_x^\top) = \det(M_i)\det(1 + g_x^\top M_i^{-1}g_x).
\end{align*}

In either case, once an $x$ is selected, the inverse of $M_{i+1}$ is efficiently updated using the same Woodbury identity mentioned in Section~\ref{sec:bait}.

\begin{minipage}{0.48\textwidth}
\begin{algorithm}[H]
\centering
\begin{algorithmic}[1]
\caption{Rank-1 Determinantal Sampling}
\REQUIRE Neural network $f(x;\theta)$, unlabeled pool of examples $U$, initial number of examples $B_0$,
number of iterations $T$, number of examples in a batch $B$.
\STATE Initialize $S$ by randomly drawing $B_0$ labeled examples from $U$
\STATE Train model on $S$:  \[\theta_1 = \argmin_{\theta} \EE_S[\ell(x,y;\theta)]\]
\FOR{$t = 1,2,\ldots,T$:}
\STATE Initialize $M_0 = \lambda I +  \frac{1}{|S|}\sum_{x \in S} g_x g_x ^\top$ \\
\FOR{$i = 1, 2,\ldots, B$:} 
\STATE $\tilde{x} = \argmax_{x \in U} \det(M_i + g_x g_x^\top)$ \\
\STATE $M_{i+1} \gets M_i + g_{\tilde{x}} g_{\tilde{x}} ^\top$ \\
\STATE $S \gets \tilde{x}$ \\
\ENDFOR
\STATE Train model on $S$:  \[\theta_t = \argmin_{\theta} \EE_S[\ell(x,y;\theta)]\]
\ENDFOR
\ENSURE Final model $\theta_{T+1}$.
\label{alg:det1}
\end{algorithmic}
\end{algorithm}
\end{minipage}
\hfill
\begin{minipage}{0.48\textwidth}
\begin{algorithm}[H]
\centering
\begin{algorithmic}[1]
\caption{Determinantal Sampling\looseness=-1}
\REQUIRE Neural network $f(x;\theta)$, unlabeled pool of examples $U$, initial number of examples $B_0$,
number of iterations $T$, number of examples in a batch $B$.
\STATE Initialize $S$ by randomly drawing $B_0$ labeled examples from $U$
\STATE Train model on $S$:  \[\theta_1 = \argmin_{\theta} \EE_S[\ell(x,y;\theta)]\]
\FOR{$t = 1,2,\ldots,T$:}
\STATE Initialize $M_0 = \lambda I +  \frac{1}{|S|}\sum_{x \in S} I(x; \theta_t^L )$ \\
\FOR{$i = 1, 2,\ldots, B$:} 
\STATE $\tilde{x} = \argmax_{x \in U} \det(M_i + I(x; \theta_t^L ))$ \\
\STATE $M_{i+1} \gets M_i + I(x; \theta_t^L )$ \\
\STATE $S \gets \tilde{x}$ \\
\ENDFOR
\STATE Train model on $S$:  \[\theta_t = \argmin_{\theta} \EE_S[\ell(x,y;\theta)]\]
\ENDFOR
\ENSURE Final model $\theta_{T+1}$.
\label{alg:det2}
\end{algorithmic}
\end{algorithm}
\end{minipage}

\section{Additional Experimental Results (Classification)}
\label{appExp}
This section shows full learning curves for the deep classification setting described in Section~\ref{sec:deepClassification}. In aggregate, these plots are used to create the pariwise comparison in Figure~\ref{fig:pairwisePlot}. All experiments were run until either the entire dataset had been labeled or program runtime exceeded 14 days. All experiments were executed five times with different random seeds on an NVIDIA Tesla P100 GPU. For CIFAR-10 experiments, \algname was initialized with $\lambda=0.01$. In all other experiments, \algname was initialized with $\lambda=1$. Models were trained with Adam, using a learning rate of 0.01 for ResNet architectures and of 0.0001 for all other architectures. Standard training data augmentation was done for CIFAR-10 data. At each round, models were trained until achieving at least 99\% training accuracy. We use the standard train / test data split provided with each dataset.\looseness=-1

\subsection{Learning curves}
\label{moreCurves}
\begin{figure}[H]
    \centering
    \includegraphics[trim={6cm, 21.45cm, 6cm, 0}, clip, scale=0.6]{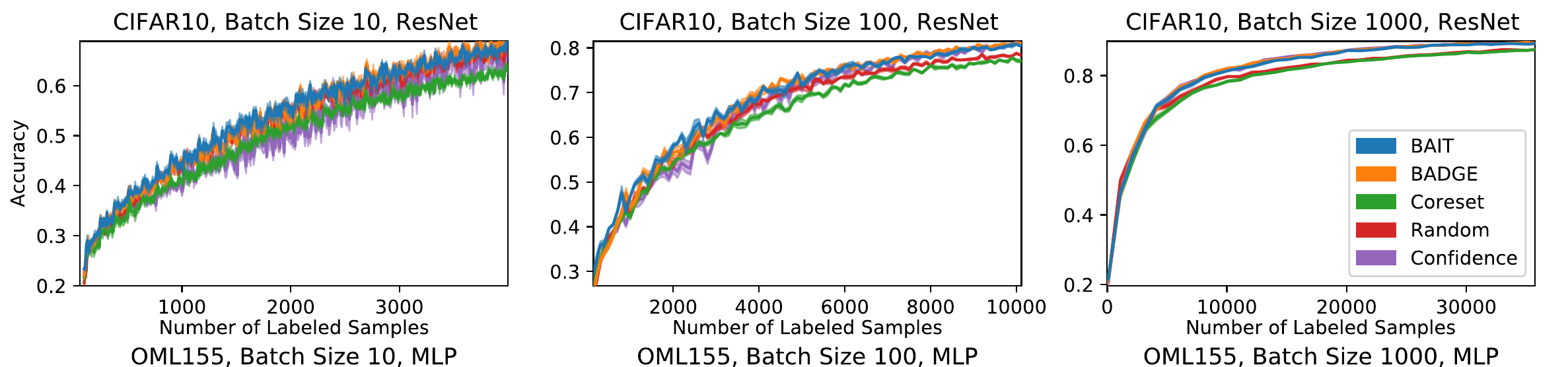}
    \caption{Learning curves across different batch sizes for CIFAR-10 data using an 18-layer ResNet and data augmentation.}
\end{figure}
\begin{figure}[H]
    \centering
    \includegraphics[trim={6cm, 1.2cm, 6cm, 20.7cm}, clip, scale=0.6]{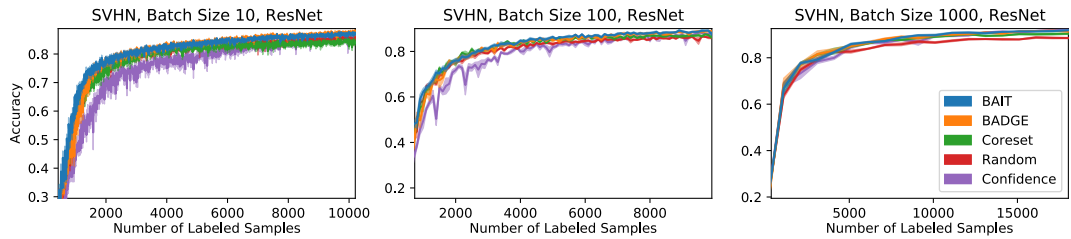}
    \caption{Learning curves across different batch sizes for SVHN data using an 18-layer ResNet.}
\end{figure}
\begin{figure}[H]
    \centering
     \includegraphics[trim={6cm, 6.5cm, 6cm, 15.5cm}, clip, scale=0.6]{figures/appendixFigs/finalPlotsTogether.pdf}
     \caption{Larning curves across different batch sizes for SVHN data using a multilayer perceptron.}
\end{figure}
\begin{figure}[H]
    \centering
     \includegraphics[trim={6cm, 11.5cm, 6cm, 10.5cm}, clip, scale=0.6]{figures/appendixFigs/finalPlotsTogether.pdf}
    \caption{Learning curves across different batch sizes for MNIST data using a multilayer perceptron.}
\end{figure}
\begin{figure}[H]
    \centering
     \includegraphics[trim={6cm, 16.6cm, 6cm, 5.3cm}, clip, scale=0.6]{figures/appendixFigs/finalPlotsTogether.pdf}
    \caption{Learning curves across different batch sizes for OML155 data using a multilayer perceptron.\looseness=-1}
\end{figure}
\subsection{Pairwise comparisons}
\label{morePairs}
\begin{figure}[H]
    \centering
    \hspace{-0.1cm}
    \includegraphics[trim={0, 0, 0, 0}, clip, scale=0.38]{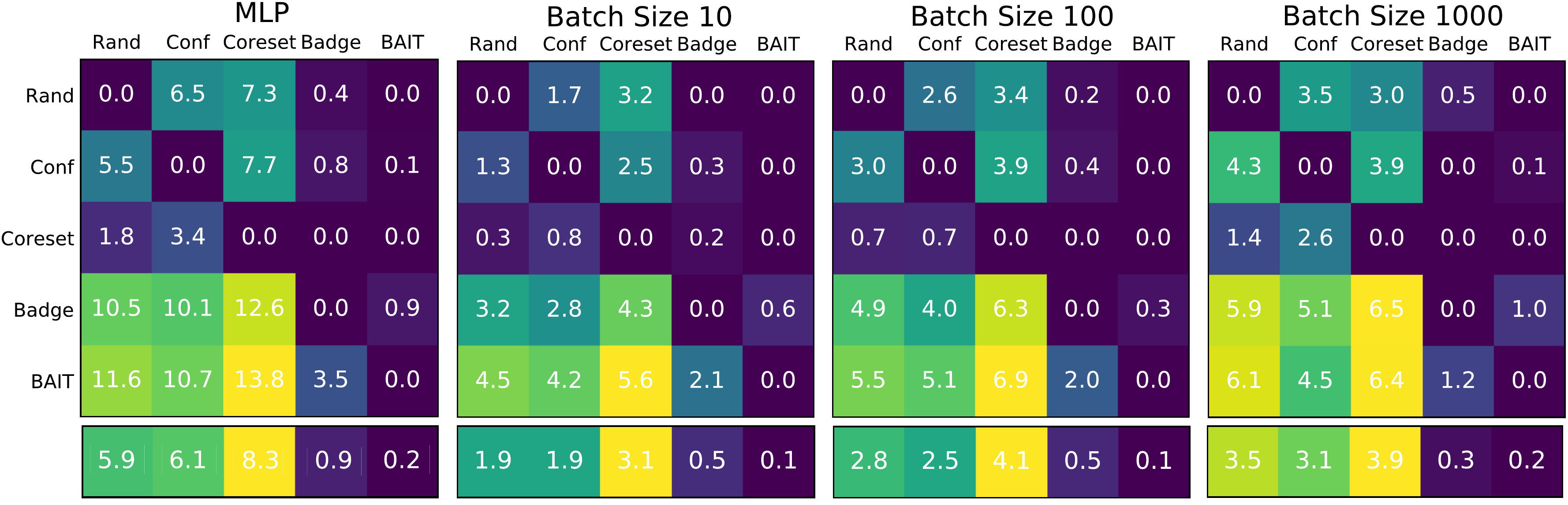}
    \caption{Pairwise distance matrices, considering only deep classification experiments of the indicated architecture or batch size.}
\end{figure}

\end{document}